\theoremstyle{plain}
\newtheorem{theorem}{Theorem}
\newtheorem{lemma}{Lemma}
\theoremstyle{definition}
\newtheorem{assumption}{Assumption}
\newtheorem{remark}{Remark}
\icmltitlerunning{}
\begin{document}

\twocolumn[
\icmltitle{TeZO: Empowering the Low-Rankness on the Temporal Dimension in the Zeroth-Order Optimization for Fine-tuning LLMs}





\begin{icmlauthorlist}
\icmlauthor{Yan Sun}{1}
\icmlauthor{Tiansheng Huang}{}
\icmlauthor{Liang Ding}{1}
\icmlauthor{Li Shen}{2}
\icmlauthor{Dacheng Tao}{3}
\end{icmlauthorlist}

\icmlaffiliation{1}{Faculty of Engineering, The University of Sydney}
\icmlaffiliation{2}{School of Cyber Science and Technology, Shenzhen Campus of Sun Yat-sen University}
\icmlaffiliation{3}{The college of Computing \& Data Science, Nanyang Technological University}

\icmlcorrespondingauthor{Li Shen}{mathshenli@gmail.com}

\icmlkeywords{LLM finetuning, zeroth-order optimization, low-rankness}

\vskip 0.3in
]



\printAffiliationsAndNotice{} 

\begin{abstract}

Zeroth-order optimization~(ZO) has demonstrated remarkable promise in efficient fine-tuning tasks for Large Language Models~(LLMs). In particular, recent advances incorporate the low-rankness of gradients, introducing low-rank ZO estimators to further reduce GPU memory consumption. However, most existing works focus solely on the low-rankness of each individual gradient, overlooking a broader property shared by all gradients throughout the training, i.e., all gradients approximately reside within a similar subspace. In this paper, we consider two factors together and propose a novel low-rank ZO estimator, {\ttfamily TeZO}, which captures the low-rankness across both the model and temporal dimension. Specifically, we represent ZO perturbations along the temporal dimension as a 3D tensor and employ Canonical Polyadic Decomposition~(CPD) to extract each low-rank 2D matrix, significantly reducing the training cost. {\ttfamily TeZO} can also be easily extended to the {\ttfamily Adam} variant while consuming less memory than {\ttfamily MeZO-SGD}, and requiring about only $35\%$ memory of {\ttfamily MeZO-Adam}. Both comprehensive theoretical analysis and extensive experimental research have validated its efficiency, achieving SOTA-comparable results with lower overhead of time and memory.

\end{abstract}

\section{Introduction}
\label{tx:introduction}
As the model size progresses at an extraordinary rate \cite{zhang2022opt,touvron2023llama,achiam2023gpt}, memory and computational resources have become the primary bottleneck limiting development. In response to this challenge, ZO has opened up new possibilities for efficient training~\cite{shen2023efficient}. Adopting gradient-free updates with a small amount of high-quality data perfectly unlocks the knowledge of the entire domain, offering significant potential for several practical applications. Since \citet{spall1992multivariate} introduced ZO as a promising alternative to FO in the training process, it has been widely applied in gradient-computation-challenged scenarios \cite{wang2018stochastic,liu2020primer} and in black-box optimization \cite{chen2017zoo,tu2019autozoom}. Recent studies have also highlighted the great potential of ZO in fine-tuning LLMs. \citet{malladi2023fine} propose the {\ttfamily MeZO} method which adopts classical {\ttfamily ZO-SGD} \cite{ghadimi2013stochastic} for fine-tuning. Furthermore, it reduces memory costs by only preserving random seeds instead of variables. Compared to FO, it can achieve comparable performance while requiring approximately 10\% of memory in practice, greatly improving memory efficiency.

\begin{figure*}[t]
\centering
    \subfigure[Low-rankness on \textit{model} dimension.]{
	\includegraphics[width=0.33\textwidth]{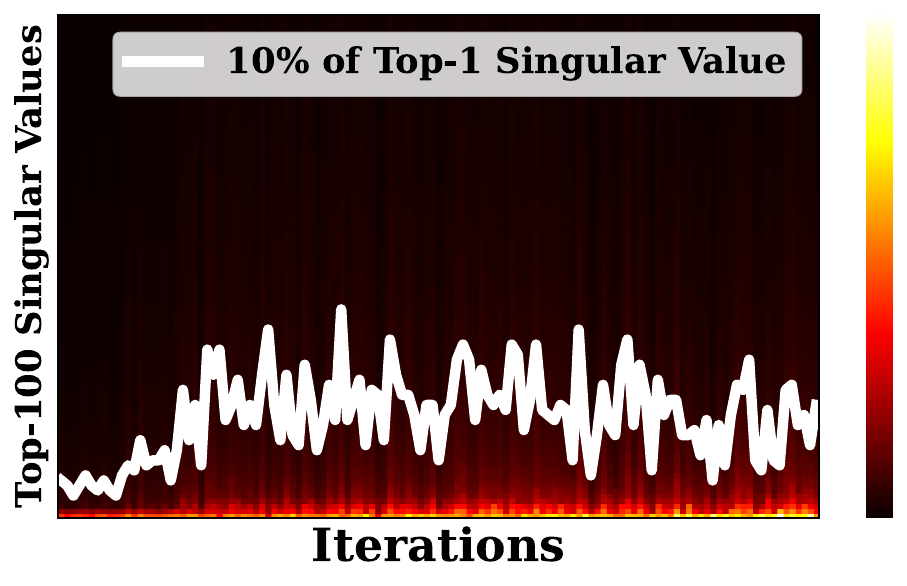}}\!\!\!
    \subfigure[Low-rankness on \textit{temporal} dimension.]{
	\includegraphics[width=0.33\textwidth]{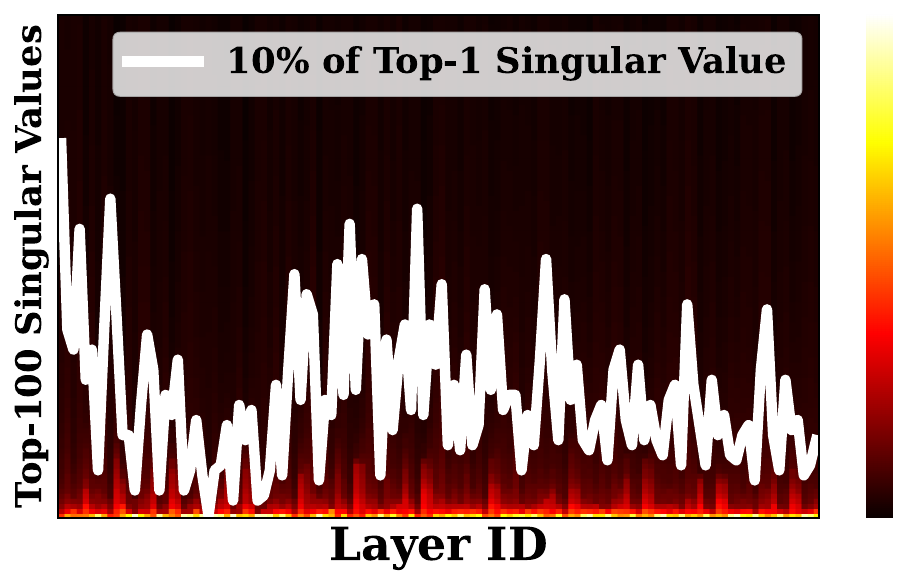}}\!\!\!
    \subfigure[Memory usage on training OPT-13B.]{
	\includegraphics[width=0.33\textwidth]{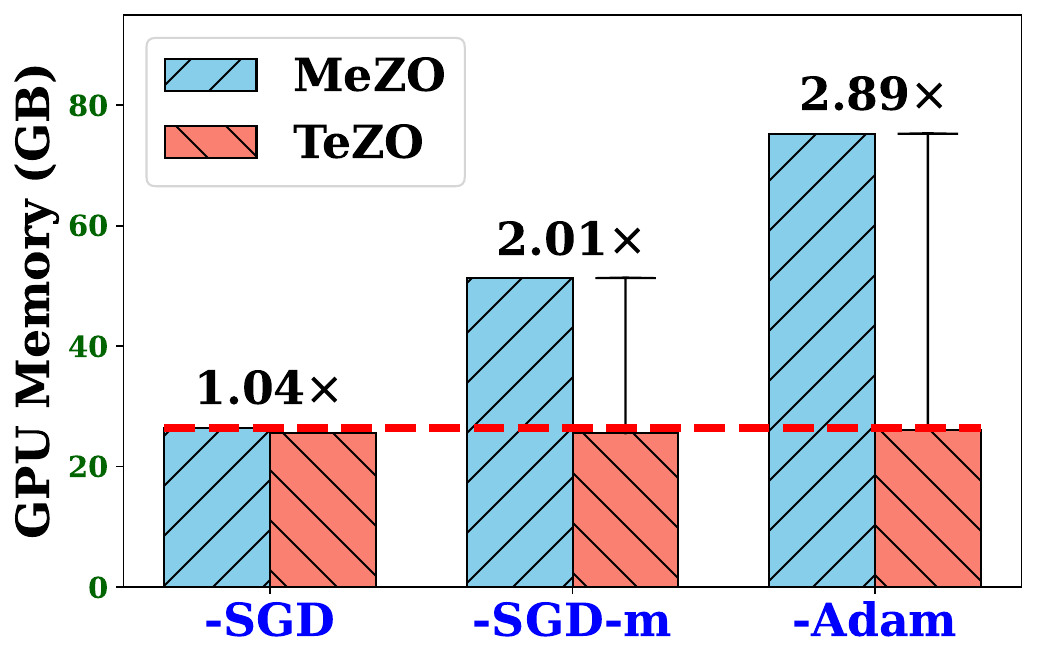}}
\vskip -0.1in
\caption{(a) and (b) are validation of the low-rankness of gradients. We fine-tune OPT-1.3B on SST-2 and calculate top-100 singular values of gradients of {\textnormal{\ttfamily layers.9.self\_attn.out\_proj.weight}}. We then concatenate these singular value vectors and display them as a heat-map in (a). Then we concatenate the normalized gradient of each layer over a total of $T$ iterations into a matrix with the size of $d_l\times T$, calculate the top-100 singular values corresponding to layers and display them as a heat-map in (b). In (c), we record the GPU memory usage of {\ttfamily MeZO}, our {\ttfamily TeZO}, and corresponding variants on training OPT-13B model. We also provide more interesting experiments on the low-rankness and studies of subspace of gradients on LLaMA-7B in Appendix~\ref{ap:low_rank}.}
\vskip -0.1in
\label{fg:intro}
\end{figure*}

Although ZO has made significant progress, it still faces two main challenges, i.e., i) lack of detailed characterization of gradients; ii) the costs of optimization states to generate random variables significantly increase as $d$ grows. This also highlights the bottleneck of ZO methods in LLM tasks. Recent advances learned the strong low-rank nature of gradients in LLMs~\cite{wang2023cuttlefish,jaiswal2024galore}, making low-rank representations in ZO methods as an ingenious solution to the aforementioned issues. With barely compromising performance, low-rank ZO methods effectively reduce the required memory for ZO estimations from $\mathcal{O}(d)$ to $\mathcal{O}(\sqrt{d}r)$ at most, where $r$ is the rankness constant \cite{chen2024enhancing,yu2024subzero}. This implementation further endows the ZO method with superior value in the tasks of fine-tuning LLMs. Nonetheless, this technique still exhibits substantial potential for further developments.

\textbf{Our Motivations.} Existing methods only consider each individual gradient to be low-rank. From the entire training perspective, the total cost achieves $\mathcal{O}(\sqrt{d}T)$ after $T$ iterations. Although techniques like lazy updates reduce the frequency of generating factor vectors, training costs still expand linearly with $\sqrt{d}$. It indicates that merely considering the low-rank form of each individual gradient is insufficient. This inspires our contemplation: \textit{could the low-rankness be incorporated along the temporal dimension as well?}

To investigate an efficient approach to addressing this question, in this paper, we comprehensively study the characteristics of the gradients in LLMs. As shown in \textit{Figure~\ref{fg:intro}.(a) and (b)}, in the tasks of fine-tuning LLMs, the gradients exhibit the following two properties simultaneously: i) the individual gradient at each iteration is approximately low-rank; ii) all gradients along $T$ iterations lie almost within a similar subspace. Obviously, combining properties i) and ii) can lead to higher efficiency. Inspired by this, we propose the {\ttfamily TeZO} estimator to empower the low-rankness on the temporal dimension. Specifically, we estimate the ZO perturbations as a 3D tensor with the size of $m\times n\times T$. By adopting the Canonical Polyadic Decomposition~(CPD) \cite{hitchcock1927expression}, the 3D tensor can be estimated by the sum of $r$ rank-1 tensors where $r$ approximates its rank. The joint low-rankness significantly reduces the cost of factor vectors during computation. At each iteration $t$, we only need to generate temporal factor vector to extract a 2D matrix, further lowering the costs from $\mathcal{O}(\sqrt{d}T)$ to $\mathcal{O}(\sqrt{d}+T)$. We also introduce an auxiliary technique to dynamically select the rank $r_l$ for each layer. In addition, we propose both memory-efficient momentum-based and adaptivity-based variants, i.e. {\ttfamily TeZO-m} and {\ttfamily TeZO-Adam}, which demonstrate the high scalability of the {\ttfamily TeZO} method. As shown in \textit{Figure~\ref{fg:intro}.(c)}, {\ttfamily TeZO-Adam} consumes less memory
than {\ttfamily MeZO-SGD}, and requiring about only $35\%$ memory of {\ttfamily MeZO-Adam}. Both comprehensive theoretical analysis and extensive experimental research are conducted to validate its efficiency. {\ttfamily TeZO-Adam} achieves performance superior to {\ttfamily MeZO-Adam} and other ZO optimizers, while requiring only the memory overhead of general {\ttfamily ZO-SGD}.
We summarize our contributions as follows:
\begin{itemize}
    \item By jointly considering low-rankness of gradients and their similarity on LLMs, we propose a novel low-rank ZO estimator, {\ttfamily TeZO}, which constructs the ZO perturbations via CPD to reduce the training overhead.
    \item We introduce an auxiliary technique to dynamically select the rank for each layer and extend {\ttfamily TeZO} to two memory-efficient variants, {\ttfamily TeZO-m} and {\ttfamily TeZO-Adam}. Both require lower memory consumption and time cost than {\ttfamily MeZO-SGD} and significantly lower than the costs of corresponding variants in {\ttfamily MeZO}.
    \item We prove that {\ttfamily TeZO} is an unbiased estimator of FO gradient, maintaining the comparable variance and convergence rate as existing ZO methods with less overhead. Extensive experiments are conducted to validate its fine-tuning efficiency on LLMs.
\end{itemize}
\section{Related Work}
\label{tx:related work}
\textbf{Zero-Order Optimization.} 
Since \citet{spall1992multivariate} proposed the ZO method, it has been extensively studied and practically incorporated in various domains \cite{chen2017zoo,tu2019autozoom,vemula2019contrasting,hajinezhad2019zone,gratton2021privacy}. By avoiding the massive computation and memory requirements of BP, it significantly reduces the training cost while maintaining high performance. As an alternative to FO, it has also been widely explored from several optimization perspectives, e.g. convergence for convex and non-convex \cite{wang2018stochastic,golovin2019gradientless,cheng2021convergence}, non-smooth \cite{liu2018zeroth,kazemi2024efficient,rando2024optimal}, variance reduction \cite{liu2018zeroth1,ji2019improved} and primal dual methods \cite{liu2018zeroth,yi2021linear,huang2024nonconvex}. It has also demonstrated strong potential for applications in certain practical scenarios, e.g. attack and defense \cite{zhao2020towards,kariyappa2021maze}, privacy protection \cite{gratton2021privacy,zhang2023dpzero,gupta2024inherent}, fairness \cite{chen2023deepzero,wang2024maft}, multi-agent \cite{tang2020distributed,maritan2023zo}, and efficient training \cite{nikolakakis2022black,fang2022communication,mukhoty2023direct}. These developments highlight the powerful potential of ZO methods in deep learning and artificial intelligence.

\textbf{Fine-tuning LLMs with ZO.} In this paper, we focus on the tasks of fine-tuning LLMs. Recent research on LLMs has demonstrated their immense value \cite{brown2020language,kojima2022large}. However, expensive time and memory costs in the training have become a significant barrier and hinder the research and application \cite{zhao2023survey,naveed2023comprehensive}. To unlock the tremendous potential of LLMs, researchers focus more on the training efficiency, leading to significant progress. The application of ZO optimizers has become a shining star from an optimization perspective. Since \citet{malladi2023fine} introduce the {\ttfamily MeZO} method, a series of ZO optimizer variants have been widely developed. \citet{jiang2024zo,yang2024adazeta,zhao2024second,zhao2024helene} focus on incorporating adaptivity and curvature information to accelerate ZO optimizers for LLMs.
\citet{liu2024sparse,guo2024zeroth,wang2024simultaneous} incorporate the sparsity to further reduce the calculations.
\citet{gautam2024variance} expand the variance reduction ZO estimator and evaluate its improvements in fine-tuning LLMs.
These methods improve ZO methods from the general optimization perspective, yield additional computational and memory overhead. Recently,
\citet{yu2024subzero,chen2024enhancing} further learn the low-rankness of each single gradient and propose different low-rank ZO estimators. These insightful works have advanced the application of ZO in fine-tuning LLMs.

\textbf{Our work.} Existing works focus on improving the calculation on single-step ZO gradient individually, ignoring the low-rankness in the gradient subspace. Our work extends this in the ZO estimator and achieves higher efficiency.

\section{Preliminaries}
\label{tx:preliminaries}

In this section, we introduce notations and review developments of ZO and its recent advances in fine-tuning LLMs.

\textbf{Notations.} We use lowercase letters to represent 1D vectors, e.g. $z$, uppercase letters to represent 2D matrices, e.g. $Z$, and bold uppercase letters to represent 3D tensors, e.g. $\bm{Z}$. Scalars are represented as lowercase Greek letters, e.g. $\alpha$. Other special computation symbols will be introduced in detail when they are first mentioned. \textit{Table~\ref{tb:notations}} shows some specific notation adopted in this paper. 

\begin{table}[b]
\centering
\vskip -0.2in
\caption{Some notations adopted in the context.}
\vskip 0.05in
\label{tb:notations}
\begin{tabular}{cc}
\toprule
\midrule
 Symbol & Notations \\ 
\midrule
 $w\ /\ W$ & learnable parameters \\
 $f(\cdot)$ & the general objective in the training \\
 $\nabla f(w)$ & FO gradient w.r.t. $w$ \\
 $\nabla^0 f(w)$ & ZO gradient w.r.t. $w$ \\
 $r$ & rank value for the ZO perturbation \\
 $\eta$ & global learning rate \ /\ step size \\
 $\rho$ & perturbation rate in the ZO estimation \\
\midrule
\bottomrule
\end{tabular}
\end{table}

\textbf{ZO Optimizer.} We consider the general and classical minimization problem on the task of fine-tuning LLMs:
\begin{equation}
    \min_w f(w)\triangleq \mathbb{E}_{\xi\sim\mathcal{D}}\left[f(w,\xi)\right],
\end{equation}
where $w\in\mathbb{R}^d$ is the learnable parameters and $\xi$ is the fine-tuning dataset sampled from the distribution $\mathcal{D}$. In this paper, we focus on a classical and widely adopted ZO method, \textit{Simultaneous Perturbation Stochastic Approximation} (SPSA) \cite{spall1992multivariate}. Specifically, SPSA estimates ZO gradient as:
\begin{equation}
\label{eq:zo_gradient}
    \nabla^0 f(w,\xi) = \frac{f(w + \rho z,\xi) - f(w - \rho z,\xi)}{2\rho} z,
\end{equation}
where $z\sim\mathcal{N}(0,I_d)$ is a random variable and $\rho$ is the perturbation rate. The estimation precision highly depends on the selection of the perturbation $\rho$. When $\rho$ is small enough, Eq.(\ref{eq:zo_gradient}) achieves an unbiased estimate of the FO gradient and efficiently drives the training process. Through two forward passes, it measures the projection component of the true gradient in the direction of the random variable $z$.

\textbf{Fine-tuning LLMs with ZO.} {\ttfamily MeZO} \cite{malladi2023fine} explores the tremendous potential of ZO methods in fine-tuning LLMs. Moreover, to reduce memory usage, it leverages PyTorch's permutation feature in random libs, replacing the storage of all random variables by recording the initial random seed for each iteration, namely the \textit{resampling technique}. This implementation enables the ZO method to achieve up to a $12\times$ memory saving in fine-tuning LLMs. The simple {\ttfamily ZO-SGD} method is sufficient to achieve performance comparable to FO methods in most tasks, enabling the training of ultra-large models like OPT-13B to be efficiently carried out on just a single A100 device. This has significantly advanced research in fine-tuning LLMs.

\textbf{Low-rank ZO.} Generally speaking, the parameter dimension of LLMs is extremely large, which constitutes a new bottleneck for the further development of {\ttfamily MeZO}: training costs of the ZO gradients increase linearly with the model dimension $d$. Furthermore, an important fact in fine-tuning LLMs is also ignored: the low-rankness of the gradients. Therefore, the applications of low-rank ZO techniques have emerged. \citet{chen2024enhancing} propose to apply matrix factorization as $Z=UV^\top$ ($Z\in\mathbb{R}^{m\times n}, U\in\mathbb{R}^{m\times r}, V\in\mathbb{R}^{n\times r}$). Additionally, \citet{yu2024subzero} adopt the form $Z=U\Sigma V^\top$ where $\Sigma\in\mathbb{R}^{r\times r}$, as shown in \textit{Figure \ref{fig:tezo}}. These techniques estimate the low-rank form of each individual perturbation per iteration, reducing the training cost of the ZO method in fine-tuning LLMs. Inspired by these insightful works, we examine another important aspect that is overlooked in the designs of previous works, i.e., low-rankness on the temporal dimension. Through the joint low-rank estimation, we propose the {\ttfamily TeZO} method which can further improve the efficiency of the ZO method in fine-tuning LLMs. Further discussions are provided in the next section.

\section{Methodology}
\label{tx:methodology}

\begin{figure*}[t]
\centering
\includegraphics[width=0.95\textwidth]{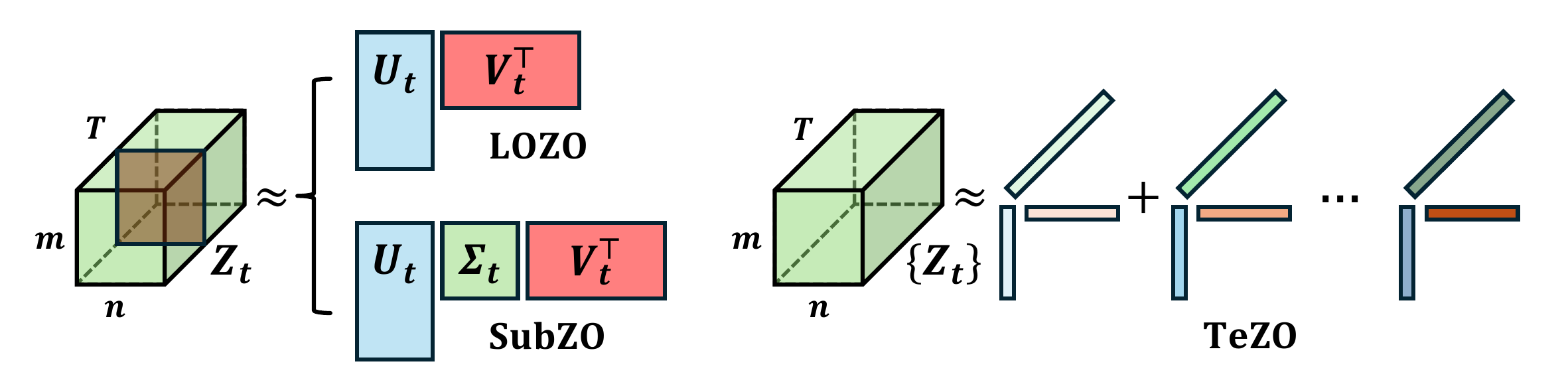}
\vskip -0.1in
\caption{The ZO diagrams for {\ttfamily LOZO}, {\ttfamily SubZO}, and our {\ttfamily TeZO} method. {\ttfamily LOZO} and {\ttfamily SubZO} focus on estimating a single perturbation $Z_t$ as the product of low-rank matrices. {\ttfamily TeZO} construct the entire perturbation set ${\bm Z} = \{Z_t\}$ via the CPD in the 3D tensor.}
\label{fig:tezo}
\end{figure*}

In this section, we introduce our proposed {\ttfamily TeZO} method. Then we introduce the adaptive selection of the rank of layer-wise gradients.
Finally, we present the momentum-based and adaptivity-based extensions of the {\ttfamily TeZO} method.

\subsection{Canonical Polyadic Decomposition and {\ttfamily TeZO}}

\textit{Canonical Polyadic Decomposition} \cite{hitchcock1927expression}, also known as \textit{Parallel Factor Analysis}, is the tensor decomposition technique widely used in data analysis, signal processing, and machine learning. It is a generalization of matrix factorization to higher-order tensors (multi-dimensional arrays). CPD aims to decompose a 3D tensor $\bm Z\in\mathbb{R}^{m\times n\times T}$ into a sum of rank-one tensors and each rank-one tensor is expressed as the outer product of vectors:
\begin{equation}
\label{eq:cpd}
\begin{split}
    {\bm Z}& \approx \sum_{s=1}^{r} \chi_s \circ u_s \circ v_s, \\ \text{where}& \ \ Z_t \approx \sum_{s=1}^{r} \tau_s \cdot \left(u_s \circ v_s\right),
\end{split}
\end{equation}
where $\tau_s=(\chi_s)_t$ at time $t$ and $\circ$ denotes the outer product. $\chi_s\in\mathbb{R}^T, u_s\in\mathbb{R}^m, v_s\in\mathbb{R}^n$ are three factor vectors.

\begin{table}[b]
\centering
\vskip -0.1in
\caption{The number of the sampled elements for training a 2D weight ($m\times n=d$) after $T$ iterations with the corresponding ZO.}
\vskip 0.05in
\label{tb:generate number}
\begin{tabular}{ccc}
\toprule
\midrule
 Method & Total elements of generating & Optimal\\ 
\midrule
 {\ttfamily MeZO} & $mnT$ & $\mathcal{O}(d\cdot T)$ \\
 {\ttfamily SubZO} & $(m+n+r)rT$ & $\mathcal{O}(\sqrt{d}\cdot T)$\\
 {\ttfamily LOZO} & $(m+n)rT$ & $\mathcal{O}(\sqrt{d}\cdot T)$\\
\midrule
 {\ttfamily TeZO} & $(m + n + T)r$ & $\mathcal{O}(\sqrt{d} + T)$\\
 \midrule
\bottomrule
\end{tabular}
\end{table}

Based on understanding the low-rank nature along the temporal dimension, we propose a novel low-rank estimation approach to represent the gradient perturbation variables at each iteration, as outlined in \textit{Eq.(\ref{eq:cpd})}. In LLMs, the proportion of 2D model parameters is much larger than that of 1D parameters, so we primarily consider the 2D cases. Specifically, in addition to the conventional factor vectors $u$ and $v$ for the model dimensions, we introduce the factor vectors $\chi$ for the temporal dimension. These three dimensions are independent of each other. Both $u$ and $v$ can be initialized at the beginning of training. Therefore, in the $t$-th iteration, we only need information related to the variable $\tau_s$ without any additional redundant variables, which can still significantly reduce training costs. Compared to existing studies, we summarize the results in Table~\ref{tb:generate number}.

\subsection{Layer-wise Selection of the Rank $r$}
The selection of rank $r$ remains an open challenge. Since ZO methods are typically employed in scenarios where FO gradients are unavailable, it is difficult to directly determine the precise rank of gradients. Recent studies have emphasized the feasibility of low-rank structures, and the rank $r$ is empirically treated as a constant hyperparameter. In fact, $r$ essentially represents a trade-off between performance and efficiency, and selecting an appropriate value for $r$ can significantly enhance the balance between these two factors. Although the constant selection can yield reliable performance, our goal is to identify a more refined solution.

To comprehensively study the rank $r$, we learn its potential connection to the model parameters in this paper. Furthermore, based on this understanding, we dynamically select the rank on different layers in the {\ttfamily TeZO} method. We consider the general cascade neural network as following:
\begin{equation}
    X_l = \sigma_l(A_l), \quad A_l=W_l X_{l-1} + b_l,
\end{equation}
where $\sigma_l(\cdot)$ is activation function, $l\leq L$ is the index of each layer, $W_l$ and $B_l$ are the weight and bias of the $l$-th layer. Let the final output simply be $X_L$, therefore, we can calculate the gradient of the parameters $W_l$ as:
\begin{equation}
    \frac{\partial f}{\partial W_l} = \left(\prod_{p=l+1}^{L}W_p\right)^\top \partial\Phi(\sigma_L,\sigma_{L-1},\cdots,\sigma_l),
\end{equation}
where $\partial\Phi(\cdot)$ are the joint gradients for all activations from the total mini-batch data samples, whose rank is closely related to the similarity of the input data. In this paper, we focus on the impact from model parameters. According to the rank propagation, the rankness of each gradient satisfies:
\begin{equation}
\label{eq:low rank gradient}
\begin{split}
    &\quad \text{Rank}\left(\frac{\partial f}{\partial W_l}\right) \leq \text{Rank}\left(\prod_{p=l+1}^{L}W_p\right) \\
    &\leq \min\left(\text{Rank}\left(W_{l+1}\right),\cdots,\text{Rank}\left(W_L\right)\right).
\end{split}
\end{equation}
Typically, during training, due to the use of weight decay regularization, the model parameters tend to maintain a high degree of low-rankness. Therefore, the gradients also inherit this property, meaning that the low-rankness of the gradients originates from the low-rankness of the model parameters. We adaptively determine the rank of different layers based on the insight from Eq.(\ref{eq:low rank gradient}). In LLMs, there is a natural cascade block structure, where each block contains components such as the attention module and the feed-forward network (FFN) module. We adopt the truncated Eq.(\ref{eq:low rank gradient}) to estimate the rankness of each layer within a block. Specifically, we split $L$ into $B$ blocks as $[\{l_0\}, \{l_1\}, \cdots, \{l_B\}]$ where $\sum_b l_b=L$. The rankness of the gradient of the $l$-th layer will be estimated as follows:
\begin{equation}
\label{eq:truncated low rank gradient}
    r_l = \min\left(\{\text{Rank}\left(W_{\{l_b\}}\right)\}, r_{max}\right),
\end{equation}
where $l\in\{l_b\}$ and $r_{max}$ is a constant. Eq.(\ref{eq:truncated low rank gradient}) is designed to preserve the transitivity of such estimations on the rankness, ensuring that the estimated $r$ does not become excessively low due to very large $L$.
$\text{Rank}(W)$ is defined as the index of the top-$r$ singular values of the matrix $W$. The selection criterion can be based on the proportion of total energy of singular values or the percentage of the largest singular value to represent its low-rank property. In our experiments, we uniformly set a specific threshold to determine $r$ that those singular values are larger than that threshold. Through Eq.(\ref{eq:truncated low rank gradient}), {\ttfamily TeZO} enables the estimation of the rank of FO gradients without explicitly calculating the FO gradients. 

\subsection{Applications in General Optimizers}
\begin{algorithm}[t]
\renewcommand{\algorithmicrequire}{\textbf{Input:}}
\renewcommand{\algorithmicensure}{\textbf{Output:}}
\caption{{\ttfamily TeZO/TeZO-m/TeZO-Adam} Methods}
\label{tx:algorithm}
\begin{algorithmic}[1]
    \REQUIRE model $\left\{W_l\right\}$, perturbation rate $\rho$, learning rate $\eta_l$, iterations $T$, momentum coefficient $\beta_1=0.9$, second-order momentum coefficient $\beta_2=0.99$, smoothing term $\epsilon=1e-5$.
    \ENSURE model $\left\{W_l\right\}$.
    \STATE Initialize the rank list $[r_1, \cdots, r_L]$ via Eq.(\ref{eq:truncated low rank gradient})
    \STATE Initialize the factor vectors $\{u_s\}$ and $\{v_s\}$ by layers
        \FOR{$t = 0, 1, 2, \cdots, T-1$}
            \STATE select the minibatch $\xi_t$ and random seed $\zeta_t$
            \STATE $W = \text{Perturbation}(W,\rho,\zeta_t)$, \ \ $f_+ = f(W,\xi)$
            \STATE $W = \text{Perturbation}(W,-2\rho,\zeta_t)$, \ \ $f_- = f(W,\xi)$
            \STATE $W = \text{Perturbation}(W,\rho,\zeta_t)$, \ \ $\kappa_t = (f_+ - f_-) / 2\rho$
            \STATE reset the random seed as $\zeta_t$
            \FOR{$W_l \in W$}
                \STATE sample $\tau\sim\mathcal{N}(0,I_{r_l})$ \ \ 
                \STATE {\ttfamily (TeZO)} $\quad \quad \quad G_t = \sum_{s=1}^{r_l}\kappa_t\tau_s\cdot\left(u_s \circ v_s\right)$
                \STATE {\ttfamily (TeZO-m)} $\quad \ \ \ \tau_M = \beta_1\tau_M + (1-\beta_1) \kappa_t \tau$
                \STATE {\color{white}{\ttfamily (TeZO-m)}} $\quad \ \ \ G_t = \sum_{s=1}^{r_l}(\tau_M)_s\cdot\left(u_s \circ v_s\right)$
                \STATE {\ttfamily (TeZO-Adam)} $\tau_M = \beta_1\tau_M + (1-\beta_1) \kappa_t \tau$
                \STATE {\color{white}{\ttfamily (TeZO-Adam)}} $\tau_V = \beta_2\tau_V + (1-\beta_2)\kappa_t^2\tau^2$
                \STATE {\color{white}{\ttfamily (TeZO-Adam)}} $M_t = \sum_{s=1}^{r_l}(\tau_M)_s\cdot\left(u_s \circ v_s\right)$
                \STATE {\color{white}{\ttfamily (TeZO-Adam)}} $V_t = \sum_{s=1}^{r_l}(\tau_V)_s\cdot\left(u_s^2 \circ v_s^2\right)$
                \STATE {\color{white}{\ttfamily (TeZO-Adam)}} $G_t = M_t/\sqrt{V_t + \epsilon}$
                \STATE $W_l=W_l-\eta_l G_t$
            \ENDFOR
        \ENDFOR
    \STATE \textbf{Function} $\text{Perturbation}(W,\rho,\zeta)$:
    \STATE reset the random seed as $\zeta$
    \FOR{$W_l \in W$}
    \STATE sample $\tau\sim\mathcal{N}(0,I_{r_l})$, \ \ $Z_t = \sum_{s=1}^{r_l}\tau_s\cdot\left(u_s \circ v_s\right)$
    \STATE $W_l = W_l + \rho Z_t$
    \ENDFOR
\end{algorithmic}
\end{algorithm}

In this part, we mainly introduce the application of {\ttfamily TeZO} in several classical optimizers, as shown in Algorithm~\ref{tx:algorithm}.

\textbf{{\ttfamily TeZO}}. {\ttfamily ZO-SGD} always serves as a foundational approach in previous works. Similarly, we adopt the resampling technique proposed by {\ttfamily MeZO} to reduce memory usage. Before each iteration, the random seed is reset to ensure sampling the same variables. Through three perturbations, we can calculate the positive and negative terms, i.e., $f_+=f(w+\rho z,\xi)$ and $f_-=f(w-\rho z,\xi)$ in Eq.(\ref{eq:zo_gradient}), and update the projected coefficient $\kappa = (f_+ - f_-)/2\rho$. It naturally requires only estimating the current $Z_t$ and then performing the update as a general ZO step.

\textbf{{\ttfamily TeZO-m}}. {\ttfamily SGD-m} method has also received widespread attention. The use of momentum allows this method to maintain greater stability during practical training. For the {\ttfamily MeZO} method, the momentum term requires an additional doubling of parameter storage, which undoubtedly increases the algorithm's cost. However, in the proposed {\ttfamily TeZO} method, this implementation becomes highly memory-efficient. Specifically, since the factor vectors of the dimensions are not affected by time, the momentum accumulation of $Z_t$ can be equivalently achieved by first applying momentum accumulation to $\kappa_t\tau_t$ and then computing the momentum term. {\ttfamily TeZO} achieves the global momentum updates with only additional storage of $\tau_M$ and is not affected by the model dimension $d$ in the training process.

\textbf{{\ttfamily TeZO-Adam}}. {\ttfamily Adam} is highly favored by researchers and demonstrates greater potential in training LLMs. Second-order momentum effectively scales the updates of coordinates with larger long-term changes in the gradient, allowing the model to adaptively adjust the learning rate for each coordinate, thus enabling efficient training. Clearly, the drawback is that it introduces more computational and storage demands. To strictly implement {\ttfamily Adam} for {\ttfamily TeZO}, we also face the storage issue of second-order momentum. To reduce the overhead, we propose a \textit{lightweight} variant where the second-order momentum is first computed separately along each factor vector and then merged, thus avoiding significant additional storage. Specifically, we review the squared gradient in second-order momentum for a 2D parameters $W_l$ on the $l$-th layer in our {\ttfamily TeZO} method :
\begin{equation}
\label{eq:second order}
\begin{split}
    &\quad \left[\nabla^0 f(W_l)\right]^2 =
    \kappa_t^2Z_t^2 = \kappa_t^2\left(\sum_{s=1}^{r_l}\tau_s\cdot(u_s\circ v_s)\right)^2 \\
    &= \underbrace{\sum_{s=1}^{r_l}\kappa_t^2\tau_s^2\cdot(u_s^2\circ v_s^2)}_{\text{Separable Term}} + \underbrace{\kappa_t^2\sum_{p\neq q}^{r_l} \tau_p\tau_q\cdot(u_p u_q\circ v_p v_q)}_{\mathbb{E}_{\tau,u,v}\left[\tau_p\tau_q\cdot(u_p u_q\circ v_p v_q) \right]={\bm 0}}.
\end{split}
\end{equation}
In the above equation, the second term, i.e., the cross term, has an overall expectation of zero on each coordinate. In practice, we test specific cases with different sizes of $u_s,v_s$ and selections of $r_l$ in our experiments. Compared with the first 
separable term, the impact of the second term becomes negligible. Therefore, our lightweight {\ttfamily TeZO-Adam} only accumulates the second-order momentum via the first term, i.e. the separable term. This allows us to calculate it via accumulation of the temporal factor vector, similar to how we compute first-order momentum. We first update the $\kappa_t^2\tau^2$ term and then expand it into the second-order momentum with $u_s^2$ and $v_s^2$. By this way, we only need to store the $\tau_V$ vector additionally during training, without being affected by the model dimension $d$. Due to page limitation, we provide more studies on the precision and efficiency about {\ttfamily TeZO-Adam} in Appendix~\ref{ap:tezo-adam}.

\textbf{Advantages of {\ttfamily TeZO}.} Existing works mainly consider the low-rank efficiency of the {\ttfamily ZO-SGD} method, while the extensions to other optimizers remain inefficient. {\ttfamily TeZO} not only takes into account the properties of a joint low-rankness, but also serves as an extension-friendly ZO design. It maintains both high memory and computational efficiency across several classical optimizers. This also brings significant benefits for fine-tuning LLMs in practice. 
\section{Theoretical Analysis}
\label{tx:theoretical analysis}


In this section, we mainly introduce the theoretical analysis of {\ttfamily TeZO}, including fundamental properties and convergence guarantees in the application of various optimizers. Due to space limitation, all proofs are detailed in the Appendix~\ref{ap:proofs}.
\begin{theorem}[Expectation and Variance]
\label{thm:mean and variance}
    Without loss of generality, we consider the 2D parameters $W\in\mathbb{R}^{m\times n}$. Its FO gradient is denoted as $\nabla_{W} f$ and ZO gradient is denoted as $\nabla_{W}^0 f$. When using the {\ttfamily TeZO} method to estimate the ZO gradient with rank $r$ and a sufficiently small perturbation rate $\rho$ as shown in Algorithm~\ref{tx:algorithm}, the following holds:
    \begin{equation}
    \label{eq:zo mean and variance}
    \begin{split}
        &\mathbb{E}_{\tau,u,v}\left[\frac{1}{r}\lim_{\rho\rightarrow 0}\nabla_{W}^0 f\right] = \nabla_{W} f, \\
        \mathbb{E}_{\tau,u,v}\Vert&\frac{1}{r}\lim_{\rho\rightarrow 0}\nabla_{W}^0 f - \nabla_{W} f\Vert^2 = \delta \Vert \nabla_{W} f \Vert^2,
    \end{split}
    \end{equation}
    where $\delta = 1 + mn + \frac{2mn}{r} + \frac{6(m+n)}{r} + \frac{10}{r}$.
\end{theorem}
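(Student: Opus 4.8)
The plan is to eliminate the perturbation parameter by passing to the limit $\rho \to 0$, and then to evaluate the two required moments by separating the three mutually independent sources of Gaussian randomness, namely $\tau \sim \mathcal{N}(0, I_r)$ and the factor vectors $u_s \sim \mathcal{N}(0, I_m)$, $v_s \sim \mathcal{N}(0, I_n)$. Writing $g = \nabla_{W} f$ and $Z = \sum_{s=1}^{r} \tau_s\, u_s v_s^\top$ (the outer products $u_s\circ v_s$ in matrix form), a first-order Taylor expansion of $f$ gives $\lim_{\rho\to 0}\kappa = \langle g, Z\rangle = \sum_{s=1}^r \tau_s\,(u_s^\top g\, v_s)$, so the estimator reduces to $\lim_{\rho\to0}\nabla_{W}^0 f = \langle g, Z\rangle\, Z$. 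It is convenient to abbreviate $a_s = u_s^\top g\, v_s$ and $b_{pq} = (u_p^\top u_q)(v_p^\top v_q)$, so that $\langle g, Z\rangle = \sum_s \tau_s a_s$ and $\|Z\|^2 = \sum_{p,q}\tau_p\tau_q b_{pq}$.

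For the mean, I would expand $\langle g, Z\rangle Z = \sum_{p,q}\tau_p\tau_q\, a_p\, u_q v_q^\top$ and take $\mathbb{E}_\tau$ first; since $\mathbb{E}[\tau_p\tau_q]=\delta_{pq}$, only the diagonal survives and the expression collapses to $\sum_{s=1}^r a_s\, u_s v_s^\top$. Taking $\mathbb{E}_{u,v}$ then rests on the single identity $\mathbb{E}_{u,v}[(u^\top g\, v)\, u v^\top] = g$, which follows entrywise from $\mathbb{E}[u_iu_k]=\delta_{ik}$ and $\mathbb{E}[v_jv_l]=\delta_{jl}$. This yields $\sum_{s=1}^r g = r\,g$, and dividing by $r$ gives the claimed unbiasedness.

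For the variance, I would use the bias-variance split: since $\mathbb{E}[\tfrac1r\langle g,Z\rangle Z]=g$, the target equals $\mathbb{E}\|\tfrac1r\langle g,Z\rangle Z\|^2 - \|g\|^2 = \tfrac{1}{r^2}\mathbb{E}\big[\langle g,Z\rangle^2\|Z\|^2\big] - \|g\|^2$. Expanding $\langle g,Z\rangle^2\|Z\|^2 = \sum_{p,p',q,q'}\tau_p\tau_{p'}\tau_q\tau_{q'}\, a_p a_{p'} b_{qq'}$ and applying Isserlis' formula $\mathbb{E}[\tau_p\tau_{p'}\tau_q\tau_{q'}] = \delta_{pp'}\delta_{qq'}+\delta_{pq}\delta_{p'q'}+\delta_{pq'}\delta_{p'q}$ produces three contraction patterns; the last two coincide by the symmetry $b_{pq}=b_{qp}$, so after $\mathbb{E}_\tau$ we are left with $\big(\sum_p a_p^2\big)\big(\sum_q b_{qq}\big) + 2\sum_{p,p'} a_p a_{p'} b_{pp'}$. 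I would then take $\mathbb{E}_{u,v}$ on each sum, splitting diagonal from off-diagonal indices and invoking the Gaussian moment identities $\mathbb{E}[(u^\top g v)^2]=\|g\|^2$, $\mathbb{E}[\|u\|^2\|v\|^2]=mn$, the diagonal fourth moment $\mathbb{E}\big[(u^\top g v)^2\|u\|^2\|v\|^2\big]=(m+2)(n+2)\|g\|^2$ (obtained from $\mathbb{E}[u_iu_k\|u\|^2]=(m+2)\delta_{ik}$ and its $v$-analogue), and the cross moment $\mathbb{E}[a_p a_{p'} b_{pp'}]=\|g\|^2$ for $p\neq p'$. Collecting the diagonal ($r$ terms) and off-diagonal ($r(r-1)$ terms) contributions gives $\mathbb{E}[\langle g,Z\rangle^2\|Z\|^2] = \big(r^2 mn + 2r\,mn + 6r(m+n) + 10r + 2r^2\big)\|g\|^2$; dividing by $r^2$ and subtracting $\|g\|^2$ recovers $\delta = 1 + mn + \tfrac{2mn}{r} + \tfrac{6(m+n)}{r} + \tfrac{10}{r}$.

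The main obstacle is the variance bookkeeping rather than any single conceptual step: one must carefully track which index collisions occur across \emph{three} independent Gaussian tensors, keeping the $\mathbb{E}_\tau$ contraction separate from the $\mathbb{E}_{u,v}$ fourth moments and distinguishing the diagonal terms (which carry the anomalous $(m+2)(n+2)$ factor and generate the $\mathcal{O}(1/r)$ corrections) from the off-diagonal terms (which dominate at $\mathcal{O}(mn)$). The most delicate individual computation is the diagonal fourth moment $\mathbb{E}[(u^\top g v)^2\|u\|^2\|v\|^2]$; the cross moment also needs care, since although $(u_p,v_p)$ and $(u_{p'},v_{p'})$ are independent, the factor $b_{pp'}$ couples them and forces all four index contractions to align. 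Two minor points I would also address are the justification of the $\rho\to 0$ limit (a second-order Taylor remainder controlled by smoothness of $f$) and making explicit the assumption that the factor vectors are standard Gaussians, on which both the unbiasedness and the exact constant $\delta$ depend.
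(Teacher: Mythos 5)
Your proposal is correct and follows essentially the same route as the paper: pass to the limit $\rho\to 0$ to get $\langle \nabla_W f, Z\rangle\, Z$, prove unbiasedness by Gaussian second moments, and compute the variance as $\tfrac{1}{r^2}\mathbb{E}\big[\langle \nabla_W f,Z\rangle^2\|Z\|^2\big]-\|\nabla_W f\|^2$ via a fourth-moment index-coincidence calculation, arriving at the identical count $\big(r^2mn+2rmn+6r(m+n)+10r+2r^2\big)\|\nabla_W f\|^2$. Your bookkeeping via the scalars $a_s$, $b_{pq}$ and Isserlis' formula is a cleaner organization of the same expansion the paper carries out entrywise through the terms $A_1,\dots,A_8,B_1,B_2$.
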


\begin{remark}
    {\ttfamily TeZO} is an unbiased zero-order estimator and its variance is linearly correlated with the norm of the FO gradient. Moreover, we provide detailed relationships between the variance coefficient $\delta_l$ and the matrix sizes $m_l,n_l$ as well as rank $r_l$. Previous work~\cite{yu2024subzero} focuses on the impact of low-rankness on variance from the perspective of the subspace for the quadratic objective. We provide the formal expression under the low-rank representation for a general smooth objective. The variance for low-rank representation is slightly larger than that of the {\ttfamily MeZO} method, i.e. $mn$, remaining within the same order. This indicates that {\ttfamily TeZO} has comparable ability to {\ttfamily MeZO} in practice while requiring significantly less training costs.
\end{remark}

Then we consider the convergence. In this paper, we consider the general smooth and non-convex function under:
\begin{assumption}
\label{as:smooth}
    $f(\cdot)$ is a smooth and nonconvex objective, i.e., for $\forall x,y\in\mathbb{R}^d$, $\Vert \nabla f(x,\xi) - \nabla f(y,\xi) \Vert \leq \lambda\Vert x - y \Vert$.
\end{assumption}
\begin{assumption}
\label{as:stochastic}
    The stochastic gradient is an unbiased estimator with bounded variance, i.e., for each data sample $\xi$, $\mathbb{E}_{\xi}\left[\nabla f(x,\xi)\right] = \nabla f(x)$, $\mathbb{E}_{\xi}\Vert\nabla f(x,\xi) - \nabla f(x)\Vert^2\leq \sigma^2$.
\end{assumption}
These are two commonly adopted assumptions in ZO optimization. Prior works~\cite{chen2024enhancing,yu2024subzero} consistently impose the requirement that some or all factor vectors exhibit column orthogonality. In contrast, our proof does not rely on the need for such additional constraints.
\begin{theorem}[Convergence]
\label{thm:convergence}
    Without loss of generality, we consider the 2D parameters $W\in\mathbb{R}^{m\times n}$. Under Assumption~\ref{as:smooth} and \ref{as:stochastic}, let $\eta=\mathcal{O}\left(\sqrt{\frac{D_0}{\lambda T\left(\rho^2\lambda^2\delta_\rho + \delta\sigma^2\right)}}\right)\leq \frac{1}{\lambda(\delta+1)}$ where $D_0 = f(W_0)-f(W_\star)$ is the initialized bias, the sequence $\left\{W_t\right\}_{t=0}^{T-1}$ generated by {\ttfamily TeZO} converges as:
    \begin{equation}
        \frac{1}{T}\sum_{t=0}^{T-1}\mathbb{E}\Vert \nabla f(W_t) \Vert^2 = \mathcal{O}\left(\sqrt{\frac{\lambda D_0\left(\rho^2\lambda^2\delta_\rho + \delta\sigma^2\right)}{T}}\right),
    \end{equation}
    where $\delta_\rho=\frac{15r^2(m+3)^3(n+3)^3 + 36r^3m^3n^3 + r^4m^3n^3}{4}$ and $\delta$ is defined in Theorem~\ref{thm:mean and variance}.
\end{theorem}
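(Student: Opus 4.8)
The plan is to run the standard descent analysis for smooth nonconvex stochastic optimization, tracking how the CPD structure of the perturbation and the finite perturbation rate $\rho$ enter the first and second conditional moments of the estimator. Writing the {\ttfamily TeZO} update as $W_{t+1}=W_t-\eta G_t$ with $G_t=\kappa_t Z_t$ and $Z_t=\sum_{s=1}^{r}\tau_s(u_s\circ v_s)$, Assumption~\ref{as:smooth} yields the one-step descent inequality
\[
\mathbb{E}\left[f(W_{t+1})\right]\le f(W_t)-\eta\langle\nabla f(W_t),\mathbb{E}[G_t]\rangle+\frac{\lambda\eta^2}{2}\mathbb{E}\Vert G_t\Vert^2,
\]
so that the entire argument reduces to controlling the conditional mean $\mathbb{E}[G_t]$ and the conditional second moment $\mathbb{E}\Vert G_t\Vert^2$, both taken over $\tau,u,v$ and the minibatch $\xi_t$.

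Next I would isolate the finite-$\rho$ effect from the idealized estimator. Applying Assumption~\ref{as:smooth} to the stochastic objective gives $\kappa_t=\langle\nabla f(W_t,\xi_t),Z_t\rangle+e_t$ with $\vert e_t\vert\le\frac{\lambda\rho}{2}\Vert Z_t\Vert^2$, hence $G_t=\langle\nabla f(W_t,\xi_t),Z_t\rangle Z_t+e_tZ_t$. For the leading term, Theorem~\ref{thm:mean and variance} (applied conditionally to $\nabla f(W_t,\xi_t)$ and, after the $1/r$ normalization identified there, folded into the effective step size) identifies the mean of $G_t$ with $\nabla f(W_t)$ and its second moment with $(1+\delta)\Vert\nabla f(W_t)\Vert^2$; the bias--variance split $\mathbb{E}_{\xi}\Vert\nabla f(W_t,\xi_t)\Vert^2\le\Vert\nabla f(W_t)\Vert^2+\sigma^2$ together with Assumption~\ref{as:stochastic} then converts the minibatch stochasticity into the additive $\delta\sigma^2$ contribution.

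The main obstacle is bounding the finite-$\rho$ residual $\mathbb{E}\Vert e_tZ_t\Vert^2\le\frac{\lambda^2\rho^2}{4}\,\mathbb{E}\Vert Z_t\Vert^6$, which forces an explicit sixth-moment computation for the structured perturbation. Expanding $\Vert Z_t\Vert^6=\big(\sum_{i,j}(\sum_s\tau_su_{s,i}v_{s,j})^2\big)^3$ and evaluating the Gaussian averages by Wick's theorem yields a polynomial in $r,m,n$; because the proof deliberately avoids the column-orthogonality assumptions used in prior work, the cross terms among distinct rank-one factors $u_pv_p^\top$ and $u_qv_q^\top$ do not vanish, and it is precisely the bookkeeping of which index pairings survive the expectation (diagonal, single-cross, and full-cross groupings over the $r$ components) that is delicate. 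Bounding the elementary pieces such as $\mathbb{E}\Vert g\Vert^6\le(k+3)^3$ for a $k$-dimensional Gaussian, these collapse to $\delta_\rho=\frac14\big(15r^2(m+3)^3(n+3)^3+36r^3m^3n^3+r^4m^3n^3\big)$; naive bounding here loses the correct leading order in $r$, so care is essential.

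Finally I would assemble the pieces. Substituting $\mathbb{E}\Vert G_t\Vert^2\lesssim(1+\delta)\Vert\nabla f(W_t)\Vert^2+\delta\sigma^2+\rho^2\lambda^2\delta_\rho$ into the descent inequality produces the net coefficient $-\eta\big(1-\tfrac{\lambda\eta(1+\delta)}{2}\big)$ on $\Vert\nabla f(W_t)\Vert^2$, which the hypothesis $\eta\le\frac{1}{\lambda(\delta+1)}$ keeps bounded away from zero (indeed below $-\eta/2$), so the gradient term can be moved to the left-hand side. Telescoping over $t=0,\dots,T-1$, using $\sum_t\big(f(W_t)-f(W_{t+1})\big)\le D_0$, dividing by $T$, and optimizing the free step size $\eta=\mathcal{O}\big(\sqrt{D_0/(\lambda T(\rho^2\lambda^2\delta_\rho+\delta\sigma^2))}\big)$ balances the $\mathcal{O}(D_0/(\eta T))$ initialization term against the $\mathcal{O}(\eta\lambda(\rho^2\lambda^2\delta_\rho+\delta\sigma^2))$ noise floor, giving the stated $\mathcal{O}\big(\sqrt{\lambda D_0(\rho^2\lambda^2\delta_\rho+\delta\sigma^2)/T}\big)$ rate.
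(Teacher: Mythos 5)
Your proposal follows essentially the same route as the paper's proof: a standard smooth nonconvex descent analysis in which the finite-$\rho$ bias is controlled by $\tfrac{\rho^2\lambda^2}{4r^2}\mathbb{E}\Vert Z_t\Vert^6$, the idealized estimator's mean and variance are imported from Theorem~\ref{thm:mean and variance}, and the step-size condition $\eta\le\frac{1}{\lambda(\delta+1)}$ keeps the gradient coefficient below $-\eta/2$ before telescoping and balancing. The only cosmetic difference is that the paper tames the sixth moment by first applying $\Vert\sum_{s=1}^r a_s\Vert^2\le r\sum_{s=1}^r\Vert a_s\Vert^2$ and then expanding the cube of $\sum_s\tau_s^2\Vert u_s\Vert^2\Vert v_s\Vert^2$ (which is where the extra factor of $r$ in $\delta_\rho$ originates), rather than a direct Wick expansion, but this is a detail of execution within the same argument.
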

\begin{remark}
    This convergence result maintains the same rate of recent ZO advances. By substituting the total parameters for $d$, we have $\delta=\mathcal{O}(d)$ and $\delta_\rho=\mathcal{O}(d^3)$. Let the perturbation rate $\rho = \mathcal{O}(\frac{\sigma}{\lambda}d^{-1})$, we have the final rate as $\mathcal{O}(\sqrt{\frac{\lambda D_0 d \sigma^2}{T}})$ which recovers the general rate of the recent ZO methods. 
    This also demonstrates the advantages of the {\ttfamily TeZO} method, as it reduces the complexity of random sample generation from $\mathcal{O}(d\cdot T)$ to $\mathcal{O}(\sqrt{d} + T)$ and effectively decreases memory usage, while theoretically maintaining the similar convergence rate.
\end{remark}

\section{Experiments}
\label{tx:experiments}

\begin{table*}[t]
\begin{center}
\renewcommand{\arraystretch}{1}
\caption{Experiments of fine-tuning for 80k iterations on RoBERTa-large and then perform evaluations. {\ttfamily FT}: FO fine-tuning (5 epochs with {\ttfamily Adam}). {\ttfamily ZERO-SHOT}: test only. \textbf{AVG.} measures the average gap across all datasets compared to {\ttfamily FT}~($+0$).}
\vskip 0.05in
\begin{sc}
\small
\setlength{\tabcolsep}{2.1mm}{\begin{tabular}{@{}c|ccccc|c|ccccc|c@{}}
\toprule
\midrule
\multicolumn{1}{c}{} & \multicolumn{6}{c}{$k=16$} & \multicolumn{6}{c}{$k=512$} \\
\cmidrule(lr){2-7} \cmidrule(lr){8-13}
\multicolumn{1}{c}{} & \multicolumn{1}{c}{{\bf SST-5}} & \multicolumn{1}{c}{{\bf SNLI}} & \multicolumn{1}{c}{{\bf MNLI}} & \multicolumn{1}{c}{{\bf QNLI}} & \multicolumn{1}{c}{{\bf TREC}} & \multicolumn{1}{c}{\bf{AVG.}} & \multicolumn{1}{c}{\bf{SST-5}} &\multicolumn{1}{c}{{\bf SNLI}} & \multicolumn{1}{c}{{\bf MNLI}} & \multicolumn{1}{c}{{\bf QNLI}} & \multicolumn{1}{c}{{\bf TREC}} & \multicolumn{1}{c}{{\bf AVG.}} \\
\cmidrule(lr){1-13} 
{\ttfamily FT} & 45.0 & 71.9 & 65.3 & 70.6 & 87.4 & $+0$ 
 & 57.5 & 88.3 & 84.2 & 87.5 & 97.2 & $+0$ 
 \\
{\ttfamily ZERO-SHOT} & 22.0 & 33.7 & 34.0 & 52.6 & 20.4 & $-35.5$ & 22.0 & 33.7 & 34.0 & 52.6 & 20.4 & $-50.4$  \\
\midrule
{\ttfamily MeZO} & 44.7 & 67.6 & 60.9 & 64.8 & 58.6 & $-8.7$ & 56.4 & 83.2 & 79.5 & 83.3 & 95.6 & $-3.3$ \\
{\ttfamily SubZO} & 44.8 & 65.7 & 62.8 & 64.7 & 56.6 & $-9.1$ & 55.7 & 83.1 & 80.1 & 83.7 & 95.4 & $-3.3$ \\
{\ttfamily LOZO} & 42.0 & 67.1 & 60.2 & 64.7 & 61.2 & $-9.0$ & 56.0 & 84.0 & 81.6 & 82.4 & 95.4 & $-3.1$ \\
{\ttfamily TeZO} & 42.8 & 67.6 & 61.8 & 64.1 & 57.4 & $-9.3$ & 54.7 & 84.0 & 79.3 & 82.7 & 95.8 & $-3.6$ \\
\midrule
{\ttfamily MeZO-m} & 44.5 & 67.6 & 62.1 & 65.9 & 61.1 & $-7.8$ & 56.6 & 83.4 & 79.9 & 83.4 & 95.4 & $-3.2$ \\
{\ttfamily LOZO-m} & 44.1 & 67.8 & 61.3 & 64.2 & 62.2 & $-8.1$ & 55.6 & 84.1 & 80.1 & 82.4 & 94.6 & $-3.6$ \\
{\ttfamily TeZO-m} & 43.8 & 67.0 & 61.1 & 65.0 & 60.4 & $-8.5$ & 54.5 & 84.4 & 79.9  & 82.9 & 95.5 & $-3.4$ \\
\midrule
\bottomrule
\end{tabular}}
\label{acc1}
\end{sc}
\end{center}
\vskip -0.15in
\end{table*}

\begin{table*}[t]
\begin{center}
\renewcommand{\arraystretch}{1}
\caption{Experiments of fine-tuning for 15k iterations on OPT-13B. Other setups are consistent with the \textit{Table \ref{acc1}}.}
\vskip 0.05in
\begin{sc}
\small
\setlength{\tabcolsep}{1.9mm}{\begin{tabular}{@{}c|ccccccccccc|c@{}}
\toprule
\midrule
\multicolumn{1}{c}{} & \multicolumn{1}{c}{\bf{SST-2}} & \multicolumn{1}{c}{{\bf RTE}} & \multicolumn{1}{c}{{\bf CB}} & \multicolumn{1}{c}{{\bf BoolQ}} & \multicolumn{1}{c}{{\bf WSC}} & \multicolumn{1}{c}{{\bf WIC}} & \multicolumn{1}{c}{{\bf MultiRC}} & \multicolumn{1}{c}{{\bf COPA}} & \multicolumn{1}{c}{{\bf ReCoRD}} & \multicolumn{1}{c}{{\bf SQuAD}} & \multicolumn{1}{c}{{\bf DROP}} & \multicolumn{1}{c}{{\bf AVG.}} \\
\cmidrule(lr){1-13}
{\ttfamily FT} & 91.5 & 70.7 & 84.0 & 76.4 & 63.5 & 70.0 & 71.1 & 79.0 & 74.1 & 84.7 & 31.5 & $+0$ \\ 
{\ttfamily ZERO-SHOT} & 58.5 & 59.4 & 46.4 & 59.1 & 38.3 & 55.2 & 46.7 & 80.0 & 81.0 & 46.6 & 14.4 & $-19.2$ \\
\midrule
{\ttfamily MeZO} & 90.1 & 60.3 & 67.9 & 66.1 & 62.5 & 54.7 & 57.7 & 87.0 & 81.1 & 79.6 & 30.4 & $-5.4$ \\
{\ttfamily SubZO} & 91.3 & 61.9 & 67.9 & 66.1 & 63.5 & 55.9 & 57.3 & 86.0 & 81.9 & 80.7 & 30.5 & $-4.8$ \\
{\ttfamily LOZO} & 90.3 & 62.5 & 67.9 & 65.6 & 63.5 & 55.3 & 56.9 & 86.0 & 81.8 & 80.5 & 30.1 & $-5.1$ \\
{\ttfamily TeZO} & 90.2 & 61.1 & 69.6 & 65.1 & 63.5 & 54.3 & 56.8 & 87.0 & 81.2 & 80.7 & 29.6 & $-5.2$ \\
\midrule
{\ttfamily MeZO-m} & 90.6 & 60.7 & 67.9 & 65.5 & 62.5 & 54.6 & 57.9 & 88.0 & 81.5 & 79.5 & 30.4 & $-5.2$ \\
{\ttfamily LOZO-m} & 90.7 & 62.1 & 67.1 & 65.7 & 62.5 & 55.7 & 57.7 & 88.0 & 81.7 & 80.7 & 29.8 & $-4.9$\\
{\ttfamily TeZO-m} & 91.1 & 61.4 & 69.6 & 64.6 & 63.5 & 55.6 & 56.7 & 88.0 & 81.3 & 80.9 & 30.0 & $-4.8$ \\
\midrule
{\ttfamily MeZO-Adam} & 92.4 & 70.5 & 67.9 & 70.0 & 62.5 & 58.7 & 58.9 & 88.0 & 81.1 & 80.8 & 30.5 & $-3.2$ \\
{\ttfamily ZO-AdaMU} & 92.0 & 72.9 & 67.9 & 71.0 & 61.5 & 59.7 & 58.4 & 86.0 & 81.5 & 82.4 & 31.1 & $-2.9$ \\
{\ttfamily TeZO-Adam} & 93.3 & 71.8 & 69.6 & 71.8 & 59.9 & 60.5 & 60.3 & 86.0 & 81.5 & 84.0 & 29.8 & $-2.5$ \\
\midrule
\bottomrule
\end{tabular}}
\label{acc2}
\end{sc}
\end{center}
\vskip -0.15in
\end{table*}

In this section, we mainly show the empirical studies. We follow the recent studies of fine-tuning LLMs tasks with ZO methods~\cite{malladi2023fine,yu2024subzero,chen2024enhancing,jiang2024zo} and adopt the similar setups to validate the efficiency. The main text primarily introduces baselines, performance evaluations, and training costs. Due to page limitations, other contents, including experimental details, hyperparameter selections and some interesting validation results, have been placed in Appendix~\ref{ap:experiments}.

\textbf{Baselines and setups.} We select recent advances of ZO and low-rank ZO methods on fine-tuning LLM tasks as baselines, including {\ttfamily MeZO} \cite{malladi2023fine}, {\ttfamily LOZO} \cite{chen2024enhancing}, {\ttfamily SubZO} \cite{yu2024subzero}, and their variants of momentum-based and {\ttfamily Adam}-based extensions in their works. We also compare {\ttfamily ZO-AdaMU} \cite{jiang2024zo} which focuses on adaptivity. Similar to these works, we conducted tests on different models, including RoBERTa-large \cite{liu1907roberta}, OPT \cite{zhang2022opt}, and LLaMA \cite{touvron2023llama}. We select a total of 16 datasets for testing and compute the final average performance to fairly compare the overall efficiency of each method.

\begin{figure*}[t]
\centering
    \subfigure[GPU Memory usage on fine-tuning OPT-13B.]{
	\includegraphics[width=0.49\textwidth]{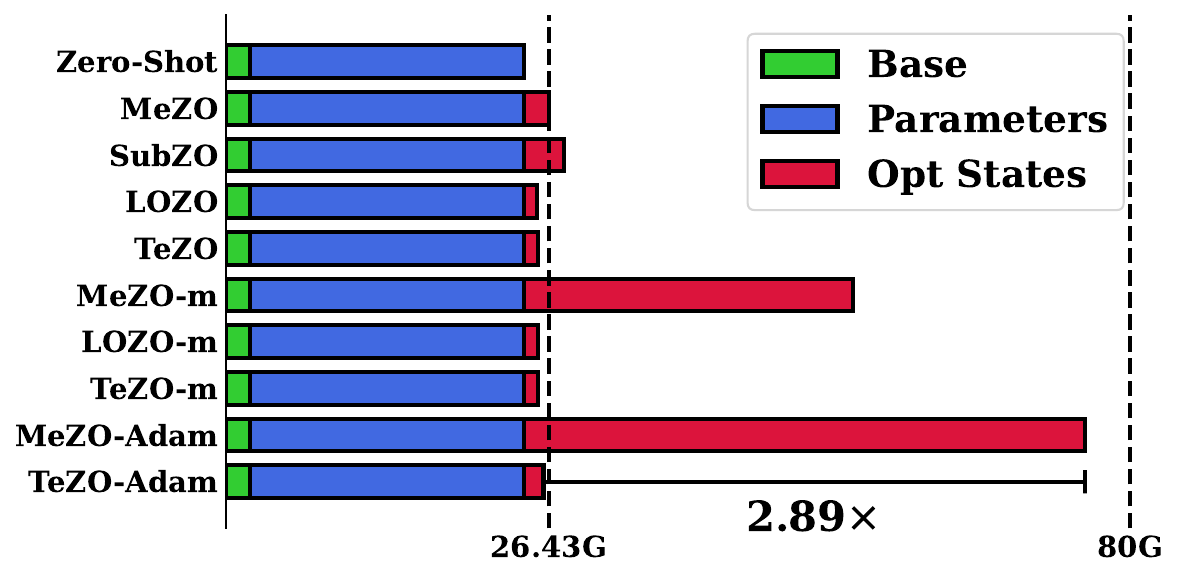}}
    \subfigure[Wall-clock time\ /\ iteration on fine-tuning OPT-13B.]{
	\includegraphics[width=0.49\textwidth]{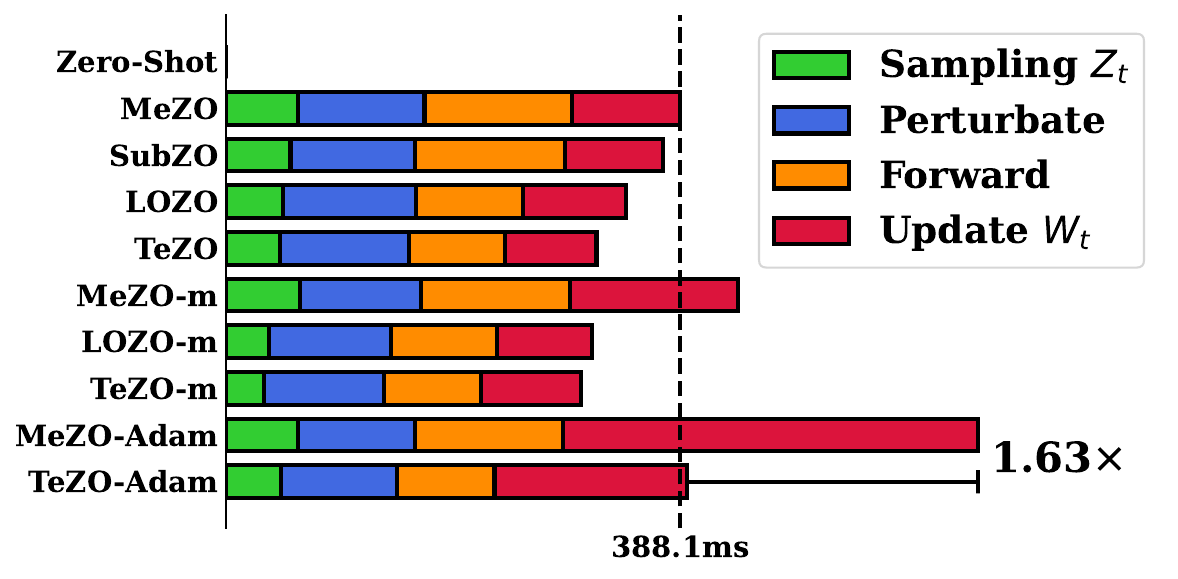}}
\vskip -0.1in
\caption{GPU memory usage (a) and wall-clock time (b) for fine-tuning LLMs with RTE dataset on H100. {\ttfamily SubZO} does not provide other memory-efficient extensions. {\ttfamily LOZO} does not provide the memory-efficient {\ttfamily Adam} extension. More details are stated in Appendix~\ref{ap:memory and time}.}
\label{fg:memory and time}
\end{figure*}

\textbf{Medium-sized Models.} We conduct the experiments on the RoBERTa-large model for the general sentiment classification, natural language inference and text retrieval tasks, as shown in \textit{Table \ref{acc1}}. To eliminate experimental randomness, the reported results are the averages of 5 runs with different random seeds. It clearly demonstrates the efficiency of the ZO method on medium-sized models. In fact, for medium-sized models, {\ttfamily MeZO} remains the most accurate ZO method. However, the gap between other low-rank methods and {\ttfamily MeZO} is not significant, not exceeding 0.6\% at the worst scenario on average. In the environment with $k=512$, almost all ZO methods show no significant differences.

\begin{table}[t]
\begin{center}
\renewcommand{\arraystretch}{1}
\vskip -0.1in
\caption{Experiments of fine-tuning 15k iterations on LLaMA-7B. Other setups are consistent with the \textit{Table \ref{acc1}}.}
\vskip 0.05in
\begin{sc}
\small
\setlength{\tabcolsep}{2.1mm}{\begin{tabular}{@{}c|cccc|c@{}}
\toprule
\midrule
\multicolumn{1}{c}{} & \multicolumn{1}{c}{{\bf SST-2}} & \multicolumn{1}{c}{{\bf RTE}} & \multicolumn{1}{c}{{\bf WSC}} & \multicolumn{1}{c}{{\bf WIC}} & \multicolumn{1}{c}{\bf{AVG.}} \\
\cmidrule(lr){1-6} 
{\ttfamily FT} & 95.6 & 86.3 & 64.4 & 70.4 & $+0$ \\ 
{\ttfamily ZERO-SHOT} & 59.7 & 49.8 & 56.7 & 50.6 & $-25.0$ \\
\midrule
{\ttfamily MeZO} & 93.7 & 69.0 & 56.6 & 60.5 & $-9.2$ \\
{\ttfamily SubZO} & 93.1 & 67.9 & 59.7 & 59.3 & $-9.2$ \\
{\ttfamily LOZO} & 93.6 & 69.5 & 59.6 & 60.2 & $-8.4$ \\
{\ttfamily TeZO} & 92.9 & 67.0 & 59.7 & 59.9 & $-9.2$ \\
\midrule
{\ttfamily MeZO-Adam} & 94.4 & 71.4 & 58.9 & 61.9 & $-7.5$ \\
{\ttfamily TeZO-Adam} & 94.2 & 75.0 & 58.9 & 60.8 & $-7.0$ \\
\midrule
\bottomrule
\end{tabular}}
\label{acc3}
\end{sc}
\end{center}
\vskip -0.15in
\end{table}

\textbf{Large-sized Models.} We conduct experiments on LLaMA-7B and OPT-13B, as shown in \textit{Table \ref{acc2}, \ref{acc3}}. Similarly, the reported results are the averages of 2 runs with different random seeds. On the large models, the low-rank ZO methods generally perform better than {\ttfamily MeZO} on OPT-13B and maintain similar performance on LLaMA-7B. The variants based on momentum and {\ttfamily Adam} perform better. {\ttfamily MeZO-m} and {\ttfamily MeZO-Adam} can achieve about 0.2\% and 2.1\% improvements. Due to the strong low-rank nature of {\ttfamily TeZO}, the alignment of factor vectors used in adaptivity still retains strong subspace properties. In practical training, the benefit of this advantage is that it constantly enforces the adaptive learning rate to stay synchronized with the structured subspace. Therefore, {\ttfamily TeZO-Adam} can achieve the better performance, about 2.2\% improvement on LLaMA-7B and 2.8\% improvement on OPT-13B compared to {\ttfamily MeZO}.

\textbf{Memory Usage and Wall-clock Time.} We evaluate the practical GPU memory usage and wall-clock time for different methods. \textit{Figure \ref{fg:memory and time}.} (a) shows the memory cost of ZO mainly consists of two parts, parameters and optimizer states. For the {\ttfamily MeZO} baseline, {\ttfamily -Adam} variant typically consumes 3$\times$ the storage. However, our proposed {\ttfamily TeZO-Adam} method requires less storage than {\ttfamily MeZO}, and is significantly lower than {\ttfamily MeZO-Adam} ($\sim$34.6\%). \textit{Figure \ref{fg:memory and time}.} (b) shows the wall-clock time comparisons, primarily including sampling, perturbations, forward pass, and update parameters. our {\ttfamily TeZO-Adam} maintains a speed comparable to the {\ttfamily MeZO} and is 1.63$\times$ faster than {\ttfamily MeZO-Adam} on one H100 device.

\begin{figure}[t]
\vskip -0.07in
\centering
    \subfigure[Training loss on SST-2.]{
	\includegraphics[width=0.24\textwidth]{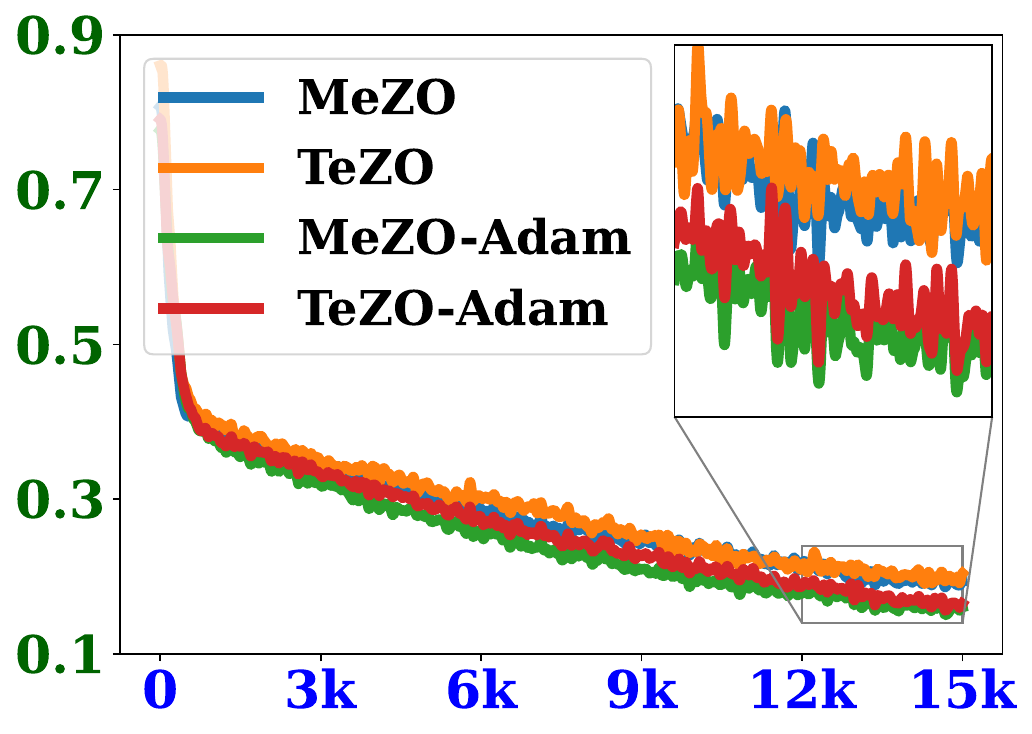}}\!\!\!\!
    \subfigure[Training loss on RTE.]{
	\includegraphics[width=0.24\textwidth]{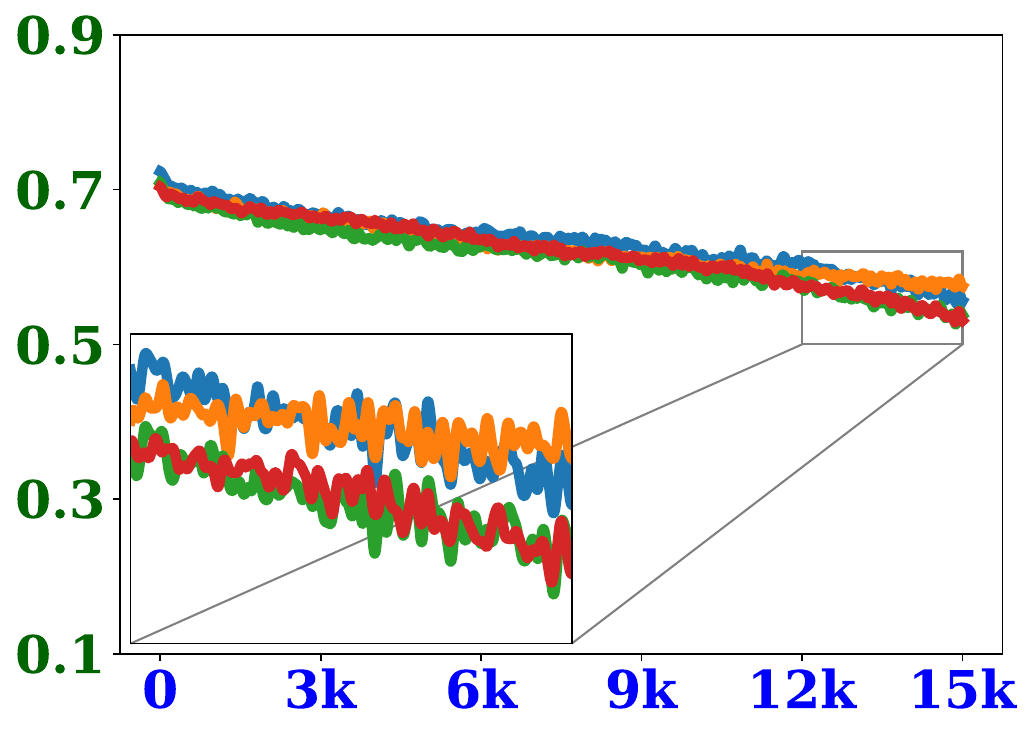}}
\vskip -0.15in
\caption{Loss curves of LLaMA-7B on SST-2 and RTE datasets on {\ttfamily ZO-SGD} and {\ttfamily ZO-Adam} methods. We use gaussian\_filter1d function in the scipy.ndimage lib to smooth curves with sigma=30.}
\label{fg:loss curves}
\vskip -0.1in
\end{figure}

\textbf{Better Performance in Adaptive ZO.} \textit{Figure \ref{fg:loss curves}.} illustrates the significant performance of adaptive ZO methods. It can be observed that the training loss curves of the {\ttfamily ZO-SGD} methods are nearly identical, indicating that these methods exhibit comparable performance. In contrast, the {\ttfamily ZO-Adam} method demonstrates superior performance, with a more pronounced reduction in loss curves and more thorough convergence during training, yielding better performance.
\section{Conclusion}
\label{tx:conclusion}
Inspired by the similarity in the gradient subspace, in this paper, we combine the low-rank properties in both the model and the temporal dimension and propose a novel low-rank ZO method, named {\ttfamily TeZO}. Moreover, {\ttfamily TeZO} can easily implement memory-efficient variants of momentum and {\ttfamily Adam}, maintaining the same resource consumption as standard {\ttfamily ZO-SGD}, but with better performance. We prove that {\ttfamily TeZO} maintains the same convergence rate as previous low-rank ZO methods while requiring fewer training costs. Furthermore, we conduct extensive evaluations of {\ttfamily TeZO} and its variants in fine-tuning tasks of LLMs, which demonstrates the significant potential of low-rank ZO methods.

\bibliography{example_paper}
\bibliographystyle{icml2025}

\newpage
\appendix
\onecolumn

\section{Experiment Materials}
\label{ap:experiments}

\subsection{Low Rankness in LLMs}
\label{ap:low_rank}
This property has been well studied and validated by several works. Especially in large models, the low-rank nature of parameters, gradients, and the FO optimizer states have triggered a series of studies. The most representative works include LoRA low-rank structure \cite{hu2021lora}, GaLore low-rank optimization \cite{zhao2024galore}, and so on. In the main text, we study the low-rankness on the OPT-1.3B model. Here, we also show tests of the low-rankness on the LLaMA-7B.

\subsubsection{Low Rankness of Each Single Gradient}
We first learn the low rankness of each single gradient.
Similarly, we consider the 2D parameters $W_l\in\mathbb{R}^{m\times n}$. Then we calculate the top-100 singular values of its gradients $\nabla_{W_l} f\in\mathbb{R}^{m\times n}$ to test the low-rankness, As shown in \textit{Figure \ref{ap:fg:llama low_rankness}}.
\begin{figure}[H]
\centering
    \subfigure[{\textnormal{\ttfamily layers.6.self\_attn.k\_proj}}]{
	\includegraphics[width=0.49\textwidth]{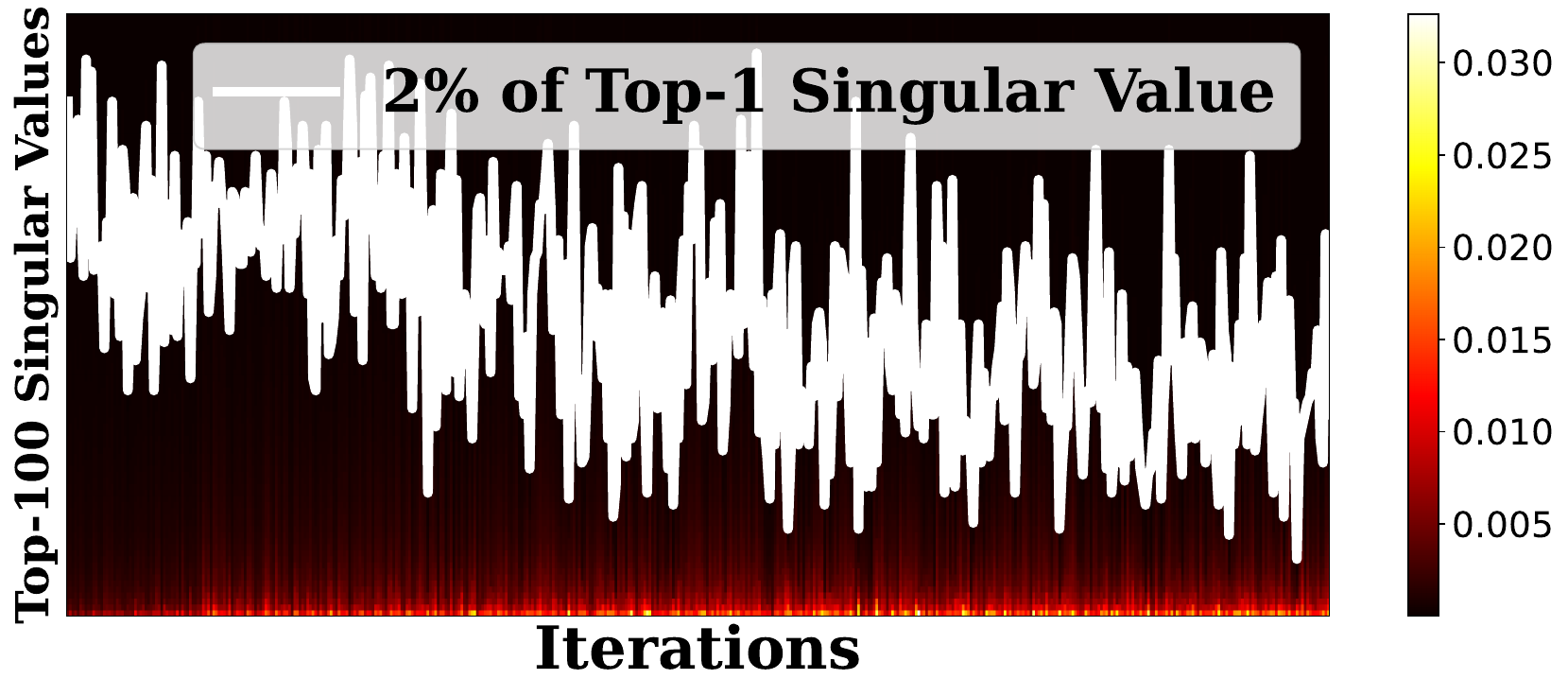}}
    \subfigure[{\textnormal{\ttfamily layers.12.self\_attn.v\_proj}}]{
	\includegraphics[width=0.49\textwidth]{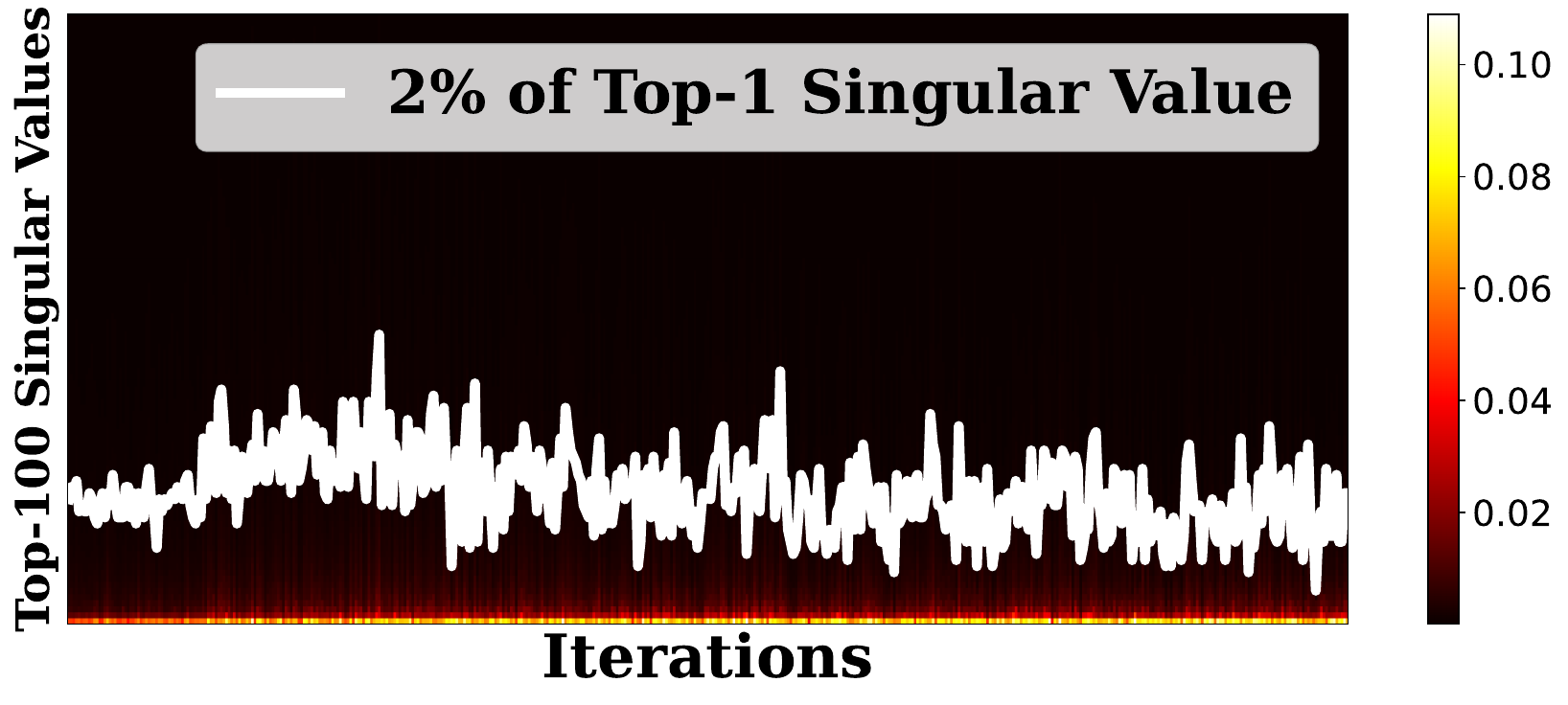}}
    \subfigure[{\textnormal{\ttfamily layers.18.self\_attn.q\_proj}}]{
	\includegraphics[width=0.49\textwidth]{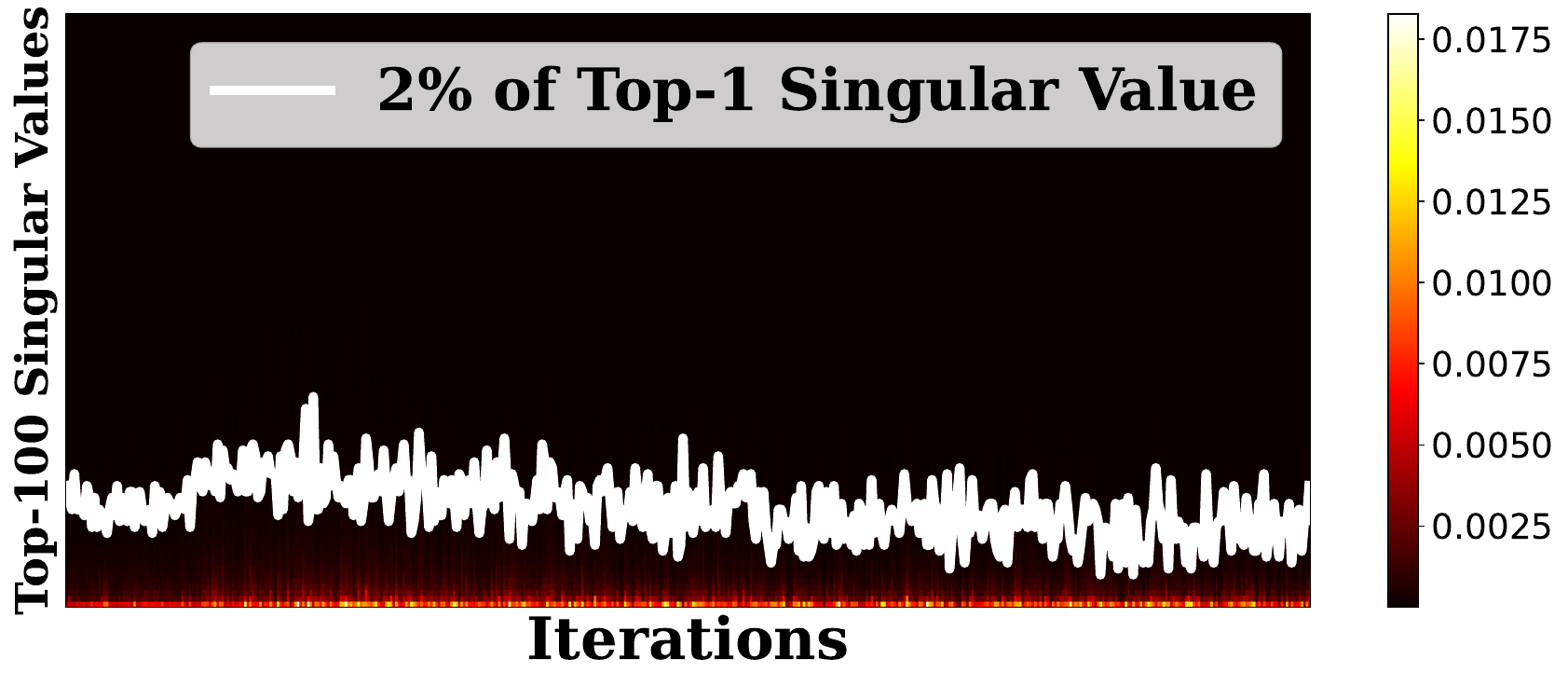}}
    \subfigure[{\textnormal{\ttfamily layers.24.self\_attn.o\_proj}}]{
	\includegraphics[width=0.49\textwidth]{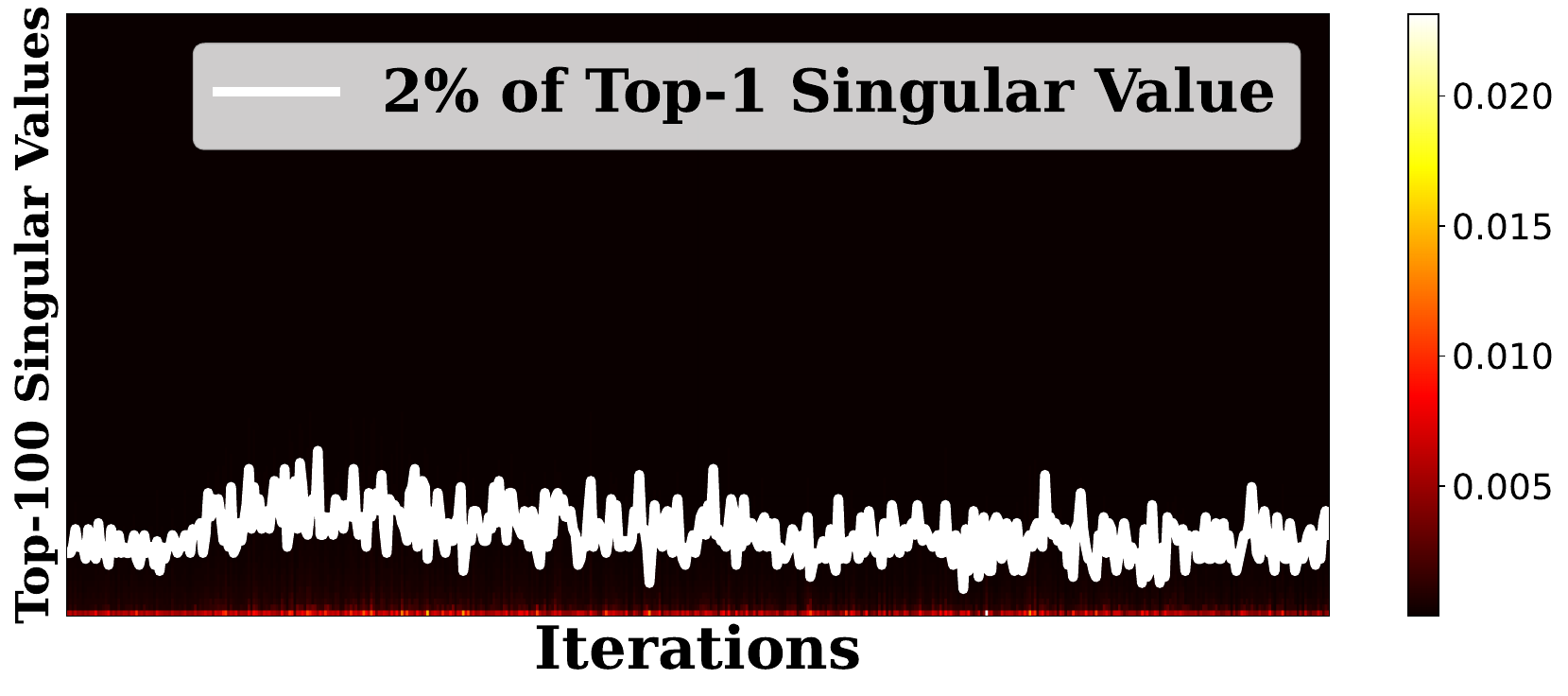}}
\caption{We finetune LLaMA-7B on SST-2 to test low-rankness of gradients. We set the batchsize as 16 and train 500 steps with 8000 samples on a H200 device. The training loss decreases from 1.04 to 0.13. We analyze the low-rank properties of the $W_K$, $W_V$, $W_Q$ and $W_O$ parameters in the $6$-th, $12$-th, $18$-th, and $24$-th modules at each iteration ($W_K,W_V,W_Q,W_O\in\mathbb{R}^{4096\times4096}$). The white lines represent the indices where the singular values are 2\% of the maximum singular value.}
\label{ap:fg:llama low_rankness}
\end{figure}
It is clear that gradients are low-rank on LLaMA-7B model and the low-rankness is even greater than that of OPT-1.3B. After around index-20, the singular value is almost completely lost. It is worth noting that in our tests, \textbf{the data samples used for each gradient computation are completely different}, which further emphasizes the universality of gradient low-rankness in LLMs. 

\subsubsection{Low Rankness of Gradient Subspace}
The low-rankness of each individual gradient has already been widely acknowledged. In this part, we continue to explore the low-rank subspace of all gradients in the LLaMA-7B model. The same, we consider the 2D parameters $W_l\in\mathbb{R}^{m\times n}$ trained for $T$ iterations. We normalize each flattened gradient and concatenate them along the $T$ dimension to form a new matrix as $G = \left[g_{w_l,0}, g_{w_l,1}, \cdots, g_{w_l,T}\right]\in\mathbb{R}^{mn\times T}$ where $g_{w_l,t}=\nabla_{w_l}f_t/\Vert\nabla_{w_l}f_t\Vert\in\mathbb{R}^{mn}$. And then we calculate the cosine value by $G^\top G$. It is important to note that \textbf{without normalization, the low-rankness of this matrix naturally holds.} This is because when the loss is large, the gradients are naturally large. As training progresses and the loss becomes smaller, the gradients will be much smaller. If these gradients are concatenated directly, although it still forms a low-rank matrix, this low-rank nature is inconsistent with the motivation behind our proposed {\ttfamily TeZO} method. {\ttfamily TeZO} expects similarity across the entire gradient space. Since we are always more concerned with whether the gradient direction is similar, we study the properties of each normalized gradient, specifically whether all gradients lie in the same subspace, as shown in \textit{Figure \ref{ap:fg:llama low_rank subspace}}.

\begin{figure}[H]
\centering
    \subfigure[{\textnormal{\ttfamily layers.15.self\_attn.v\_proj}}]{
	\includegraphics[width=0.35\textwidth]{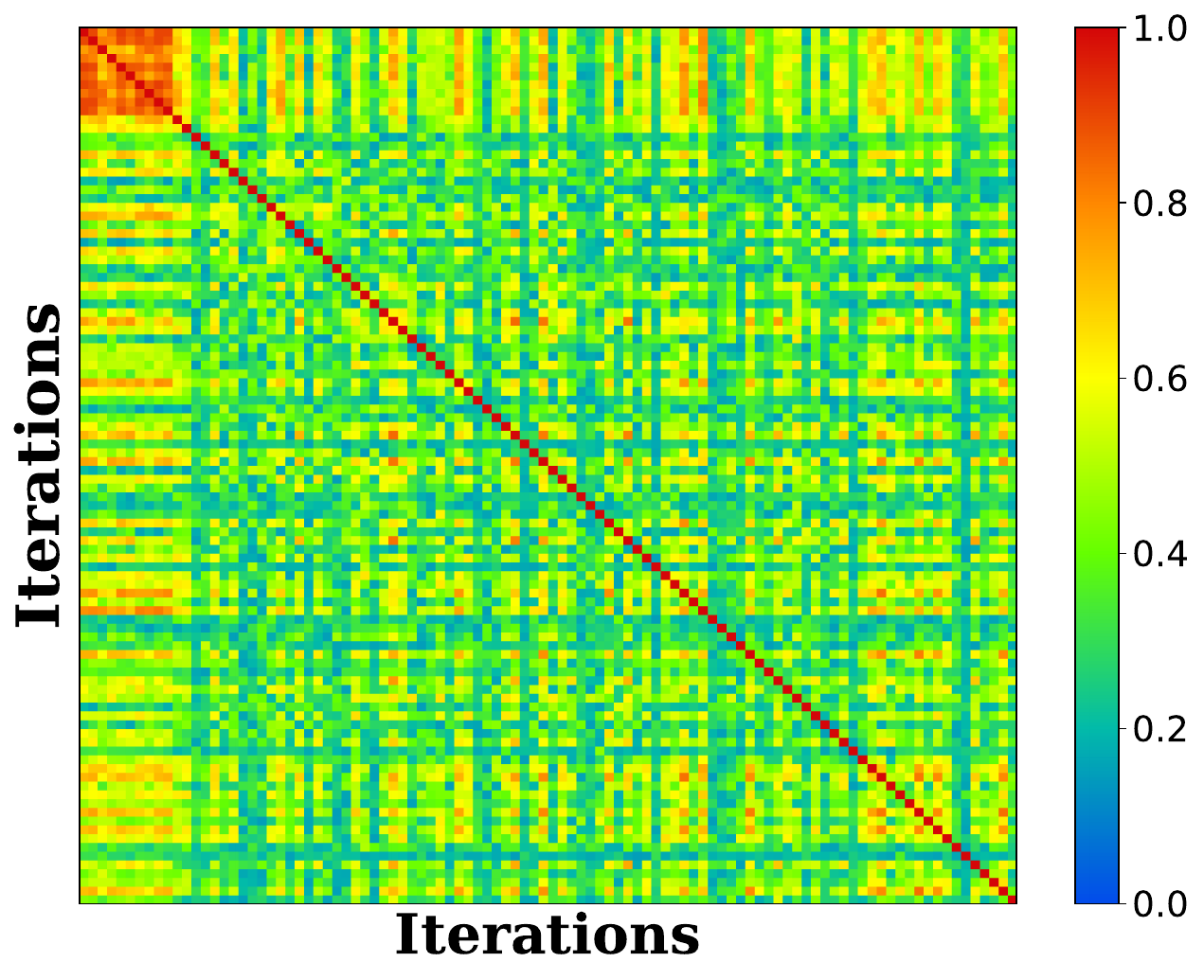}}
    \subfigure[{\textnormal{\ttfamily layers.28.self\_attn.o\_proj}}]{
	\includegraphics[width=0.35\textwidth]{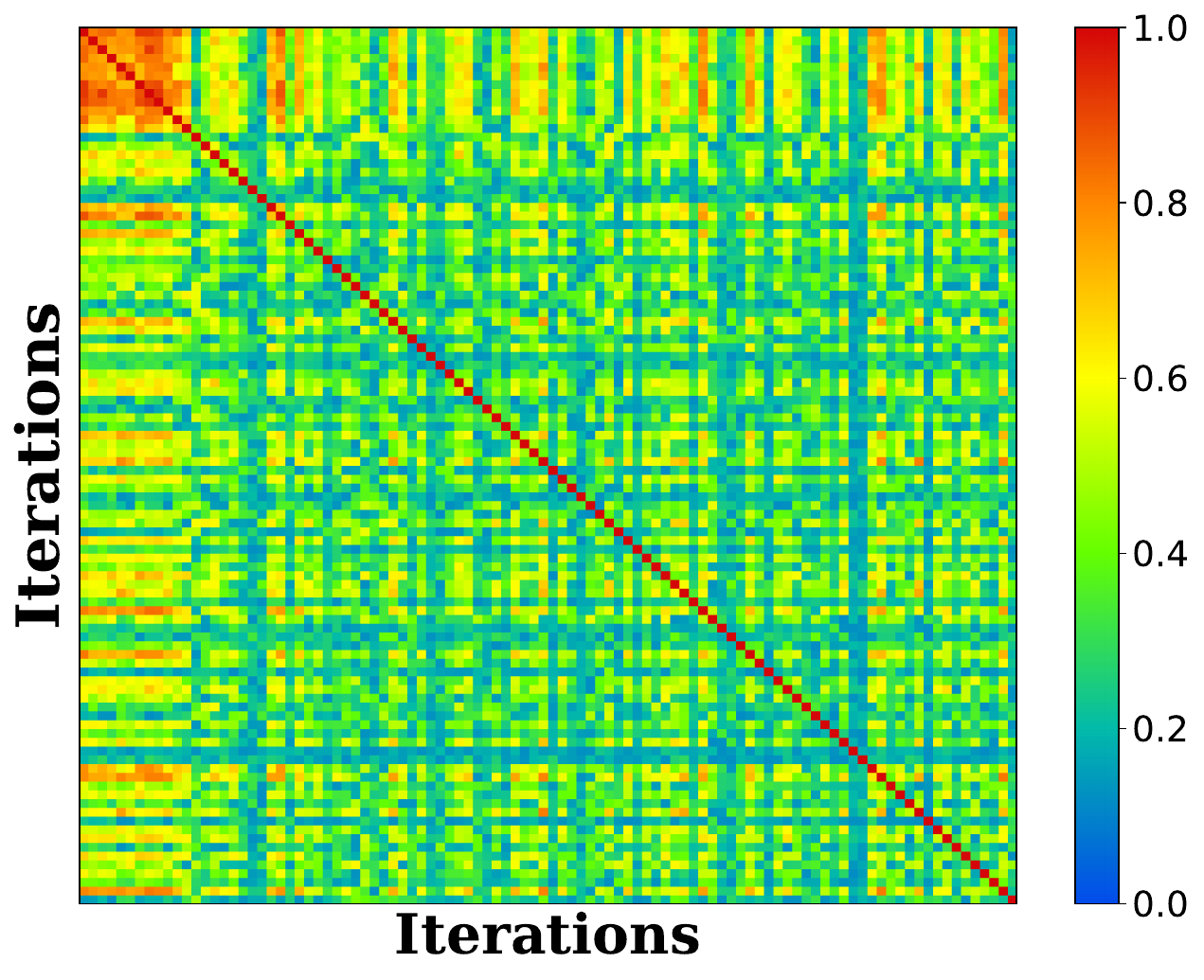}}
\caption{We finetune LLaMA-7B on SST-2 to test the similarity between gradients at different iterations. Similarly, we set the batchsize as 16 and train 500 steps with 8000 samples on a H200 device. The training loss decreases from 1.04 to 0.13. We calculate the cosine value of each gradient pair $(\nabla_{W_l} f_{t_1}, \nabla_{W_l} f_{t_2})$ where $t_1, t_2\in\left[0,1,2,\cdots,499\right]$ and show their values as the heat maps above.}
\label{ap:fg:llama low_rank subspace}
\end{figure}

It can be seen that the similarity between gradients is generally high, and the distribution of cosine distances is relatively concentrated. This also highlights the low-rankness of the gradient space in training LLMs, where gradients from different samples exhibit strong similarity.

\subsubsection{Low-rankness between Weights and Gradients}
In this part, we explore the close relationship between the low-rankness of gradients and that of model parameters. 

\begin{figure}[H]
\centering
    \subfigure[{\textnormal{\ttfamily layers.6.self\_attn.k\_proj}}]{
	\includegraphics[width=0.49\textwidth]{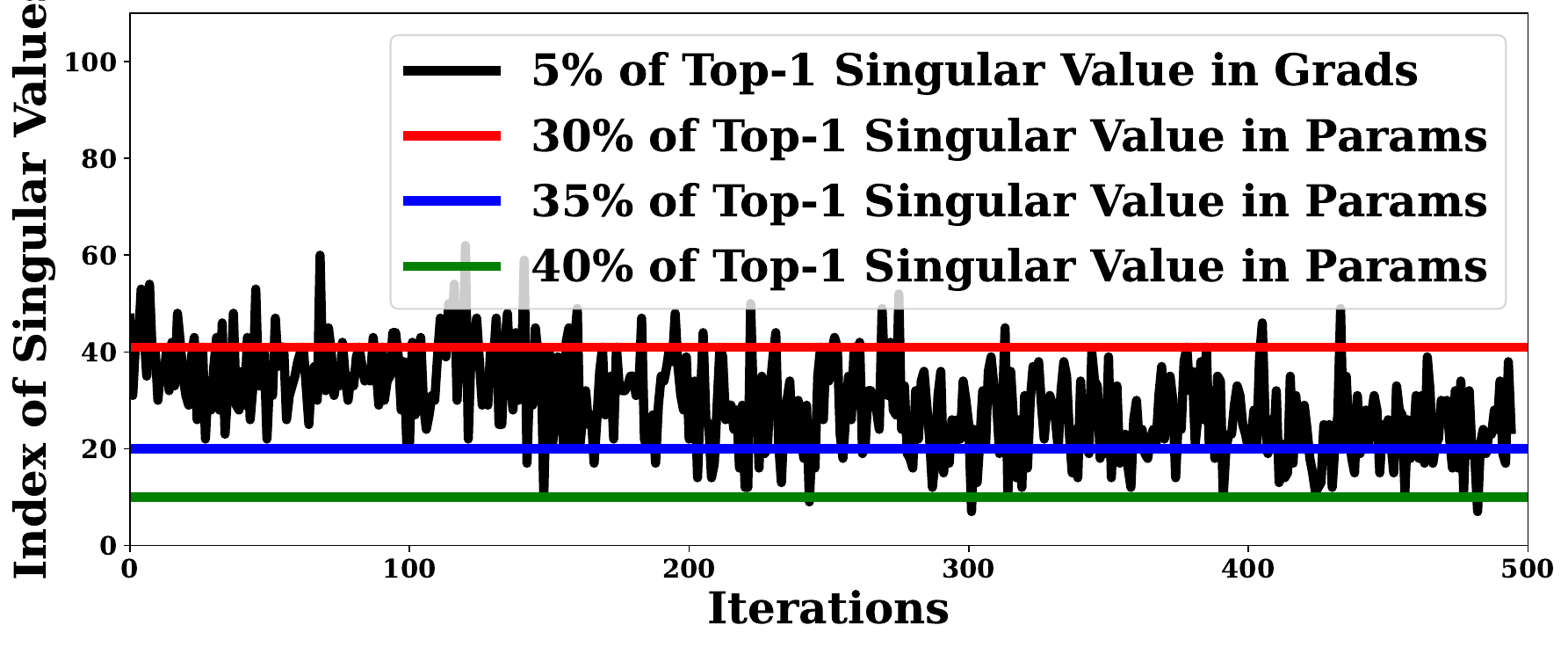}}
    \subfigure[{\textnormal{\ttfamily layers.18.self\_attn.q\_proj}}]{
	\includegraphics[width=0.49\textwidth]{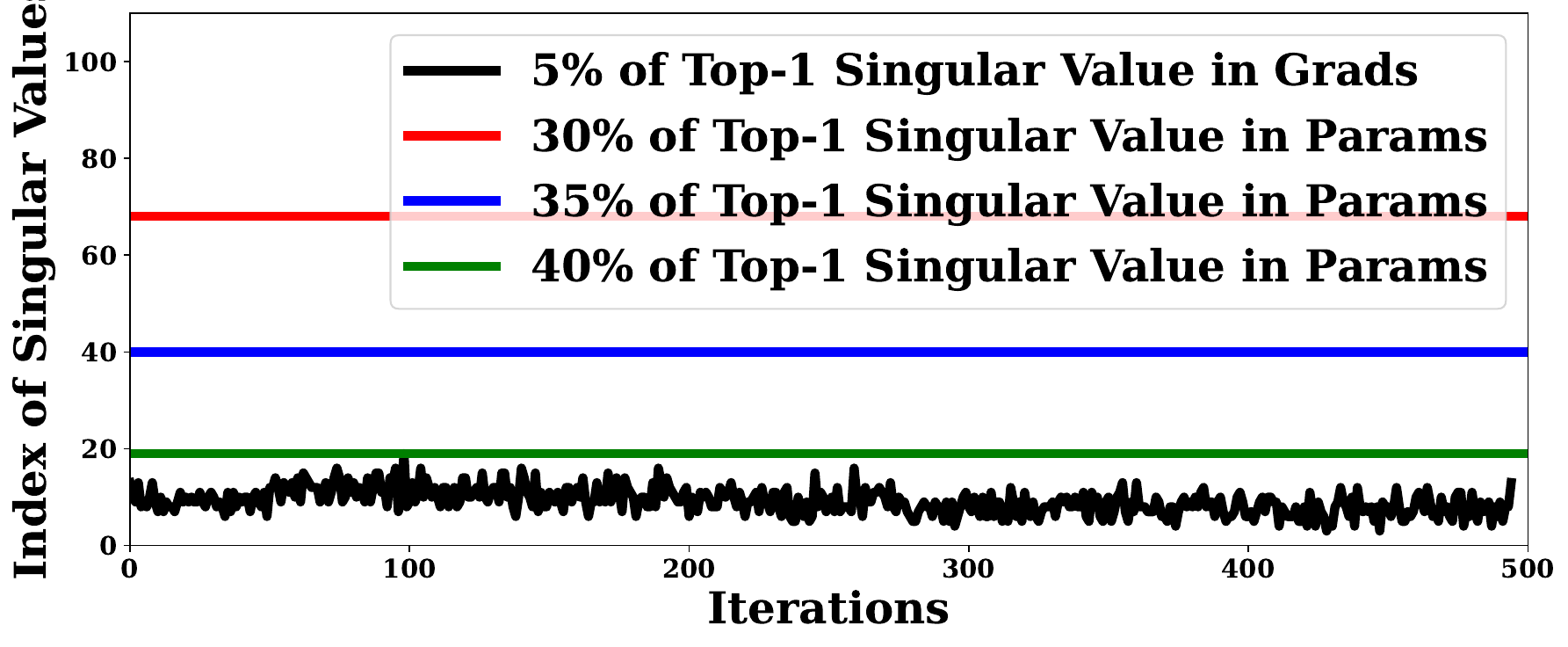}}
\caption{We finetune LLaMA-7B on SST-2 to test the similarity between gradients at different iterations. We compared the relationship between the low-rank properties of parameters and gradients, and demonstrat the rank levels of parameters.}
\label{ap:fg:llama low_rank weight vs grad}
\end{figure}

We can observe that, although the degree of low-rankness in gradients and parameters is not strictly aligned, they still exhibit a high degree of correlation within a certain fluctuation range. In fact, due to the very small learning rate, the gradient updates for individual parameters are negligible, which allows the low-rank nature of the model to remain relatively stable. This also validates the effectiveness of our dynamic selection method within a certain range. The dynamic selection method eliminates the need for additional hyperparameter tuning while ensuring experimental stability.

\newpage
\subsection{The Efficiency of Lightweight Second-order Momentum in {\ttfamily TeZO-Adam}}
\label{ap:tezo-adam}
In this paper, we propose a lightweight variant to address the storage issue of second-order momentum in the {\ttfamily TeZO-Adam} variant:
\begin{equation}
    \left[\nabla^0 f(w_t)\right]^2 =
    \kappa_t^2Z_t^2 = \kappa_t^2\left(\sum_{s=1}^{r_l}\tau_s\cdot(u_s\circ v_s)\right)^2 = \underbrace{\sum_{s=1}^{r_l}\kappa_t^2\tau_s^2\cdot(u_s^2\circ v_s^2)}_{\text{Separable Term}} + \underbrace{\kappa_t^2\sum_{p\neq q}^{r_l} \tau_p\tau_q\cdot(u_p u_q\circ v_p v_q)}_{\mathbb{E}_{\tau,u,v}\left[\tau_p\tau_q\cdot(u_p u_q\circ v_p v_q) \right]={\bm 0}}.
\end{equation}
The separable term is memory-efficient which can be calculated by the accumulation of the factor vector $\tau$.

\subsubsection{Errors in One Step}
By the definitions, we consider the decomposition of $Z\in\mathbb{R}^{m\times n}$ by the factor vectors $u_s\in\mathbb{R}^{m}$, $v_s\in\mathbb{R}^{n}$, and $\tau\in\mathbb{R}^r$. And we consider an example comparable in scale to the LLaMA-7B model and set $m=n=4096$. And we select $r=64$ to evaluate the error. Since we consider the parameters at time $t$ where $\kappa_t$ can be treated as a constant. Without loss of generality, we set $\kappa_t=1$ directly and examine the error by randomly sampling $\tau,u,v$, as shown in the following figure.
\begin{align*}
\underbrace{\raisebox{-0.5\height}{\includegraphics[width=0.25\textwidth]{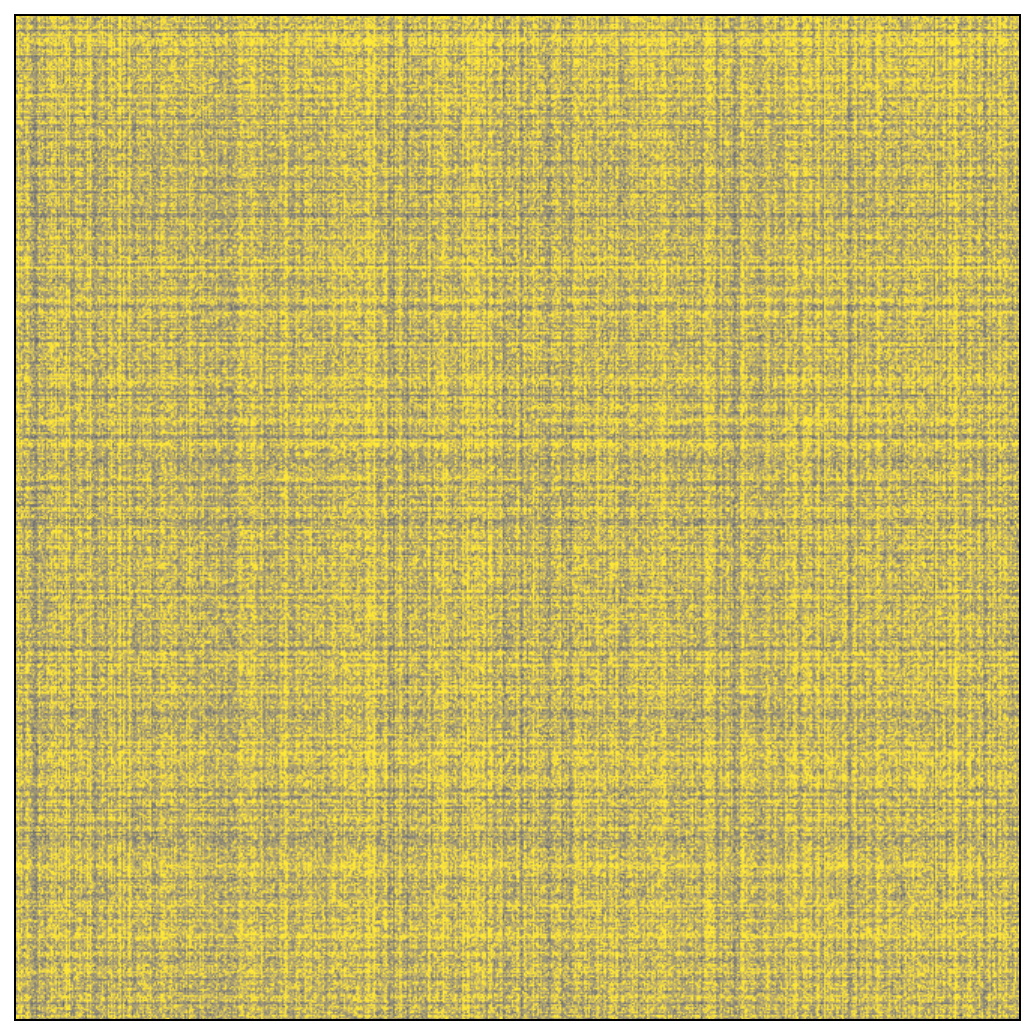}}}_{\left(\sum_{s=1}^{r_l}\tau_s\cdot(u_s\circ v_s)\right)^2} = \underbrace{\raisebox{-0.5\height}{\includegraphics[width=0.25\textwidth]{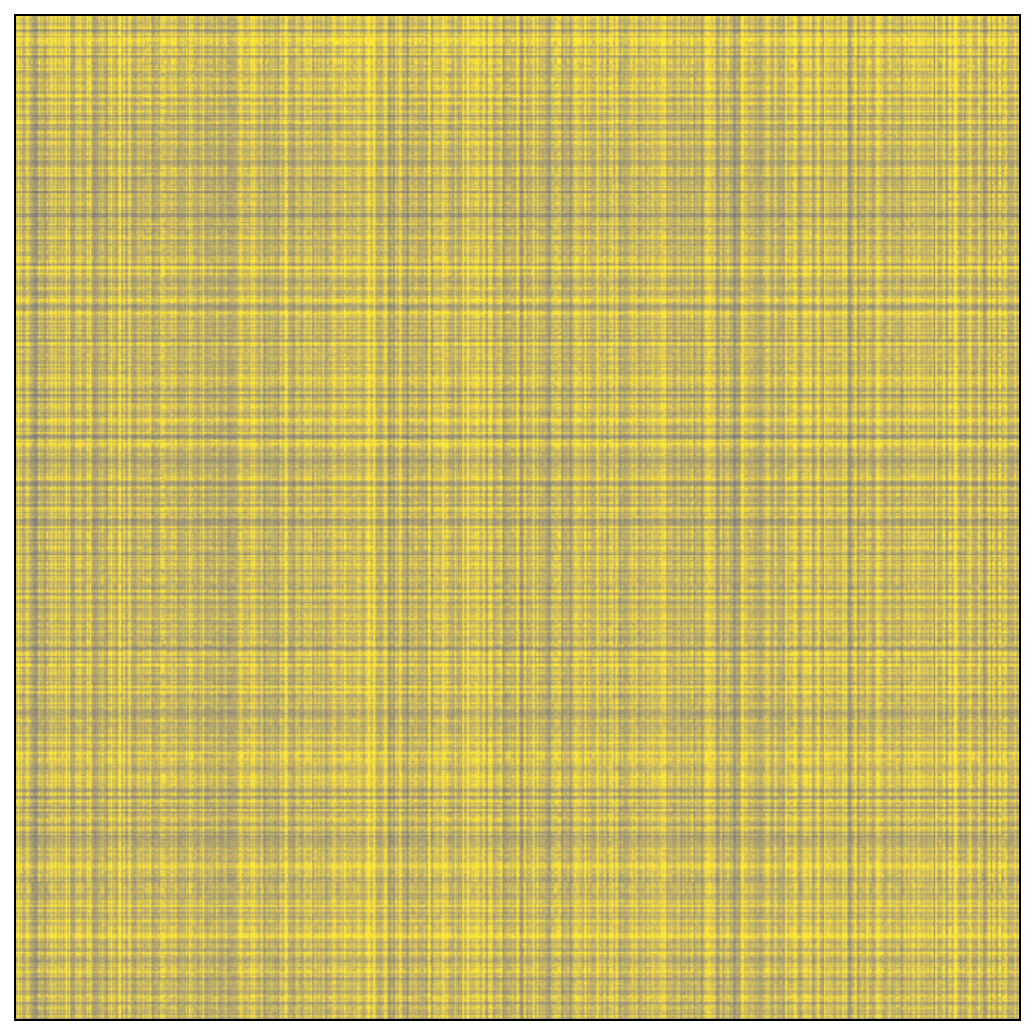}}}_{\sum_{s=1}^{r}\tau_s^2\cdot(u_s^2\circ v_s^2)} + \underbrace{\raisebox{-0.512\height}{\includegraphics[width=0.315\textwidth]{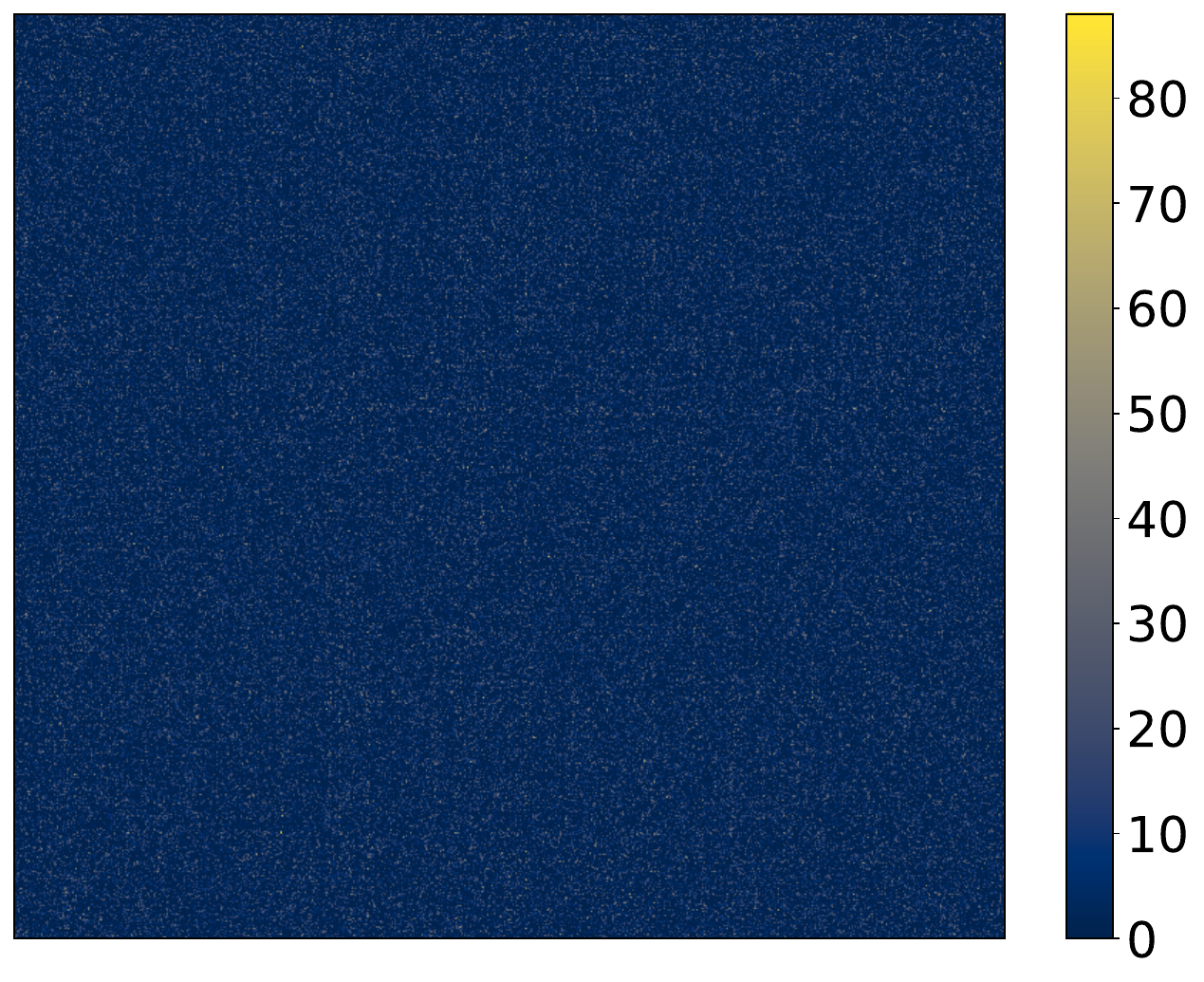}}}_{\sum_{p\neq q}^{r} \tau_p\tau_q\cdot(u_p u_q\circ v_p v_q)\approx \bf{0}}
\end{align*}
This clearly demonstrates the precision of our lightweight estimation. The second term is almost zero, and the cost of calculating it is very high, including both storage and computation. Therefore, we eliminate the second term directly and accumulate the second-order momentum of the first term as the second-order momentum of {\ttfamily Adam} for updates. This significantly reduces the training cost, making the training overhead of our {\ttfamily TeZO-Adam} method almost consistent with that of {\ttfamily MeZO-SGD}, significantly lower than the {\ttfamily MeZO-Adam} method.

\subsubsection{Accumulated Errors after $T$ Steps}
Then we learn the accumulated errors in the training process. We define the update of standard second-order momentum as $V_{t+1} = \beta_2 V_t + (1-\beta_2)\left(\sum_{s=1}^{r}\tau_{s,t}\cdot(u_s\circ v_s)\right)^2$, and that of {\ttfamily TeZO-Adam} as $\hat{V}_{t+1} = \beta_2 \hat{V}_{t} + (1-\beta_2)\sum_{s=1}^{r}\tau_{s,t}^2\cdot(u_s^2\circ v_s^2)$. We report the averaged accumulated errors $E_t = (V_{t} - \hat{V}_t)/mn$ over 1000 steps under $\beta_2=0.99$, as shown in \textit{Figure \ref{ap:fg:accumulated errors}}.

\begin{figure}[H]
\centering
\includegraphics[width=0.52\textwidth]{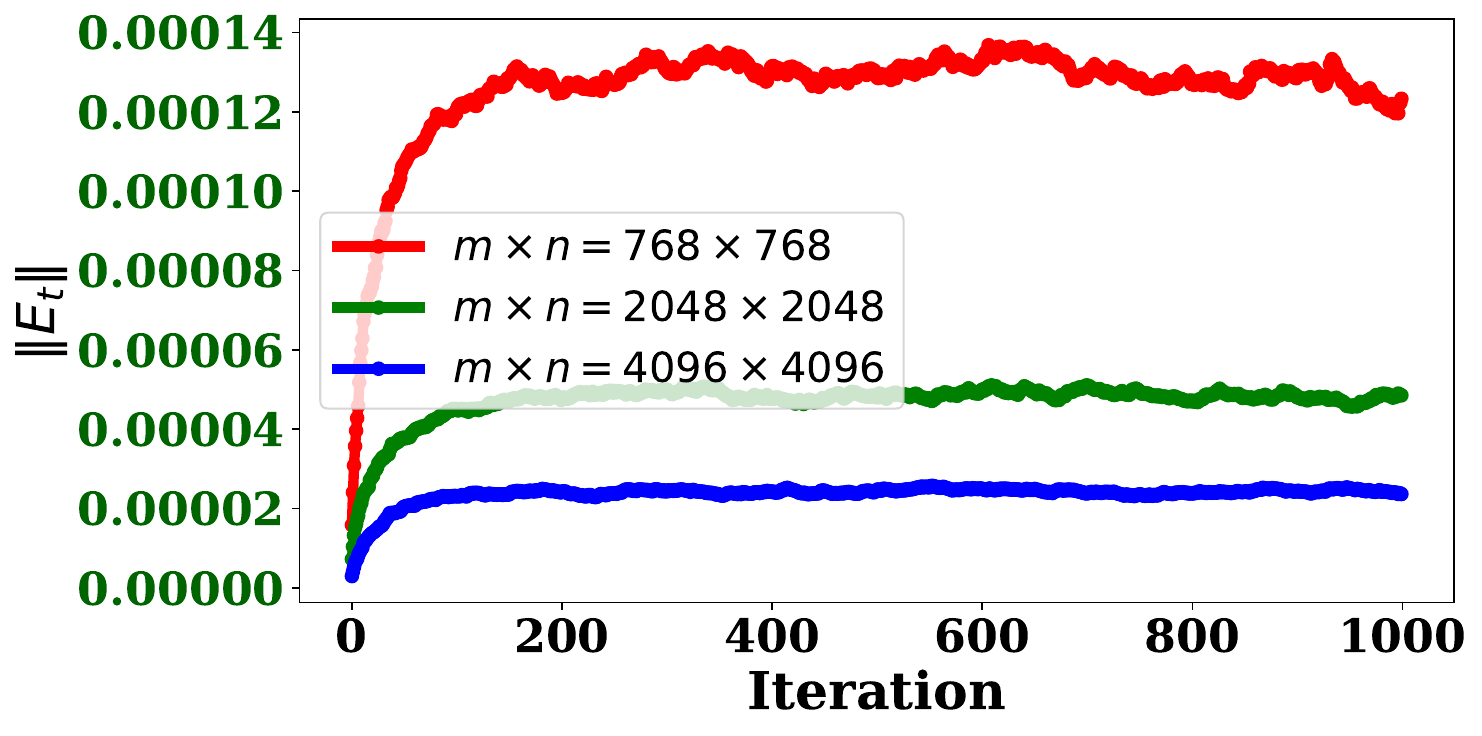}
\caption{$\Vert E_t\Vert$ under different $m, n$ and $r=64$. It can be observed that the averaged accumulated errors decrease as the model size increases, which highlights the practicality of our proposed lightweight second-order moment estimation on LLMs.}
\label{ap:fg:accumulated errors}
\end{figure}

\newpage
\subsection{Setups and Hyperparameters}
\label{ap:hyperparameters}

We follow previous works \cite{malladi2023fine,chen2024enhancing,yu2024subzero} and summarized the range of hyperparameter selections, as shown in \textit{Table \ref{ap:tb:hyperparamters}}. Although certain hyperparameters, such as batchsize and perturbation rate, may introduce subtle variations, we fix these hyperparameters across all methods to ensure fairness. The primary search hyperparameter is the learning rate across models of different scales.

\begin{table}[H]
\centering
\vskip -0.1in
\caption{Hyperparameter recommendations for different models.}
\vskip 0.05in
\label{ap:tb:hyperparamters}
\begin{tabular}{c|cc|c|c|c}
\toprule
\midrule
 Method & Hyperparameters & Search Range & RoBERTa-large & OPT-13B & LLaMA-7B \\ 
 \midrule
 \multirow{3.5}{*}{\makecell{{\ttfamily MeZO} \\ {\ttfamily MeZO-m}}} & Batchsize &  \{16,32,64\} & 64 & \multicolumn{2}{c}{16} \\
 \cmidrule(lr){2-6}
                  & learning rate & \{1e-4, 1e-5, 1e-6, 1e-7\} & 1e-6 & 1e-7 & 1e-6 \\
 \cmidrule(lr){2-6}
                  & perturbation rate & 1e-3 & \multicolumn{3}{c}{1e-3} \\
 \midrule
 \multirow{3.5}{*}{{\ttfamily MeZO-Adam}} & Batchsize & \{16,32,64\} & 64 & \multicolumn{2}{c}{16} \\
 \cmidrule(lr){2-6}
                  & learning rate & \{1e-4, 3e-5, 1e-5, 3e-6\} & - & 1e-5 & 3e-5 / 1e-5 \\
 \cmidrule(lr){2-6}
                  & perturbation rate & 1e-3 & \multicolumn{3}{c}{1e-3} \\
 \midrule
 \multirow{6.5}{*}{{\ttfamily SubZO}} & Batchsize & \{16,32,64\} & 64 & \multicolumn{2}{c}{16} \\
 \cmidrule(lr){2-6}
                   & learning rate & \{1e-4, 1e-5, 1e-6, 1e-7\} & 1e-6 & 1e-7 & 1e-6 \\
 \cmidrule(lr){2-6}
                   & perturbation rate & 1e-3 & \multicolumn{3}{c}{1e-3} \\
 \cmidrule(lr){2-6}
                   & rank & \{32, 64, 128\} & \multicolumn{3}{c}{64} \\
 \cmidrule(lr){2-6}
                   & lazy update interval & \{50, 100, 500\} & \multicolumn{3}{c}{500} \\
 \midrule
 \multirow{6.5}{*}{\makecell{{\ttfamily LOZO} \\ {\ttfamily LOZO-m}}} & Batchsize & \{16,32,64\} & 64 & \multicolumn{2}{c}{16} \\
 \cmidrule(lr){2-6}
                  & learning rate & \{1e-4, 1e-5, 1e-6, 1e-7\} & 1e-6 & 1e-7 & 1e-6 \\
 \cmidrule(lr){2-6}
                  & perturbation rate & 1e-3 & \multicolumn{3}{c}{1e-3} \\
 \cmidrule(lr){2-6}
                  & rank & \{8, 16, 32\} & \multicolumn{3}{c}{8} \\
 \cmidrule(lr){2-6}
                   & lazy update interval & \{50, 100, 500\} & \multicolumn{3}{c}{100} \\
 \midrule
 \multirow{6.5}{*}{\makecell{{\ttfamily TeZO} \\ {\ttfamily TeZO-m}}} & Batchsize & \{16,32,64\} & 64 & \multicolumn{2}{c}{16} \\
 \cmidrule(lr){2-6}
                  & learning rate & \{1e-4, 1e-5, 1e-6, 1e-7\} & 1e-6 & 1e-7 & 1e-6 \\
 \cmidrule(lr){2-6}
                  & perturbation rate & 1e-3 & \multicolumn{3}{c}{1e-3} \\
 \cmidrule(lr){2-6}
                  & threshold to select rank & \{20\%, 25\%, 30\%, 35\%\} & \multicolumn{3}{c}{25\% / 30\%} \\
 \cmidrule(lr){2-6}
                  & maximum threshold of rank & \{32, 64, 128, 256\} & \multicolumn{3}{c}{depend on tasks} \\
 \midrule
 \multirow{6.5}{*}{{\ttfamily TeZO-Adam}} & Batchsize & \{16,32,64\} & 64 & \multicolumn{2}{c}{16} \\
 \cmidrule(lr){2-6}
                       & learning rate & \{1e-4, 3e-5, 1e-5, 3e-6\} & - & 1e-5 & 3e-5 / 1e-5 \\
 \cmidrule(lr){2-6}
                       & perturbation rate & 1e-3 & \multicolumn{3}{c}{1e-3} \\
 \cmidrule(lr){2-6}
                       & threshold to select rank & \{20\%, 25\%, 30\%, 35\%\} & \multicolumn{3}{c}{25\% / 30\%} \\
 \cmidrule(lr){2-6}
                       & maximum threshold of rank & \{32, 64, 128, 256\} & \multicolumn{3}{c}{depend on tasks} \\
 \midrule
\bottomrule
\end{tabular}
\end{table}
We refer to the selections reported in previous works and grid search each hyperparameter. Although further fine-tuning of hyperparameters for specific tasks could yield greater benefits, we fix the hyperparameter selections for fairness. The recommended value reported in the table above is provided only as a reference on which most tasks work well.

\newpage
\subsection{Memory Usage and Wall-clock Time on Different Model Sizes}
\label{ap:memory and time}
We extensively test the training efficiency of the OPT and LLaMA models across different model sizes, as shown in \textit{Table \ref{ap:tb:memory}} and \textit{Table \ref{ap:tb:time}}. 
\begin{table}[H]
\centering
\vskip -0.1in
\caption{GPU memory usage (max memory reserved) for fine-tuning LLMs on RTE dataset on a single H100 device.}
\vskip 0.05in
\label{ap:tb:memory}
\begin{tabular}{c|cccccc|ccc}
\toprule
\midrule
\multicolumn{1}{c}{} & \multicolumn{6}{c}{OPT} & \multicolumn{3}{c}{LLaMA} \\
\cmidrule(lr){2-7} \cmidrule(lr){8-10}
 \multicolumn{1}{c}{} & 125M & 1.3B & 2.7B & 6.7B & 13B & 30B & 7B & 13B & 30B \\
\midrule
{\ttfamily Zero-Shot} & 0.36 {\ttfamily G} & 2.90 {\ttfamily G} & 5.44 {\ttfamily G} & 12.73 {\ttfamily G} & 24.39 {\ttfamily G} & 56.46 {\ttfamily G} & 12.92 {\ttfamily G} & 24.89 {\ttfamily G} & 61.86 {\ttfamily G} \\
\midrule
{\ttfamily MeZO} & 0.57 {\ttfamily G} & 3.48 {\ttfamily G} & 6.40 {\ttfamily G} & 14.40 {\ttfamily G} & 26.43 {\ttfamily G} & 60.31 {\ttfamily G} & 13.91 {\ttfamily G} & 26.12 {\ttfamily G} & 63.77 {\ttfamily G} \\
{\ttfamily SubZO} & 0.54 {\ttfamily G} & 3.28 {\ttfamily G} & 5.92 {\ttfamily G} & 14.91 {\ttfamily G} & 26.97 {\ttfamily G} & 61.18 {\ttfamily G} & 14.28 {\ttfamily G} & 26.67 {\ttfamily G} & 64.45 {\ttfamily G} \\
{\ttfamily LOZO} & 0.52 {\ttfamily G} & 3.31 {\ttfamily G} & 5.95 {\ttfamily G} & 13.66 {\ttfamily G} & 25.50 {\ttfamily G} & 57.93 {\ttfamily G} & 13.44 {\ttfamily G} & 25.77 {\ttfamily G} & 62.38 {\ttfamily G} \\
{\ttfamily TeZO} & 0.54 {\ttfamily G} & 3.28 {\ttfamily G} & 5.92 {\ttfamily G} & 13.68 {\ttfamily G} & 25.52 {\ttfamily G} & 57.95 {\ttfamily G} & 13.47 {\ttfamily G} & 25.79 {\ttfamily G} & 62.40 {\ttfamily G} \\
\midrule
{\ttfamily MeZO-m} & 0.89 {\ttfamily G} & 6.31 {\ttfamily G} & 11.77 {\ttfamily G} & 27.19 {\ttfamily G} & 51.32 {\ttfamily G} & $>$80 {\ttfamily G} & 26.85 {\ttfamily G} & 51.31 {\ttfamily G} & $>$80 {\ttfamily G} \\
{\ttfamily LOZO-m} & 0.53 {\ttfamily G} & 3.32 {\ttfamily G} & 5.97 {\ttfamily G} & 13.68 {\ttfamily G} & 25.53 {\ttfamily G} & 57.99 {\ttfamily G} & 13.47 {\ttfamily G} & 25.80 {\ttfamily G} & 62.44 {\ttfamily G} \\
{\ttfamily TeZO-m} & 0.55 {\ttfamily G} & 3.29 {\ttfamily G} & 5.93 {\ttfamily G} & 13.69 {\ttfamily G} & 25.52 {\ttfamily G} & 57.96 {\ttfamily G} & 13.48 {\ttfamily G} & 25.79 {\ttfamily G} & 62.40 {\ttfamily G} \\
\midrule
{\ttfamily MeZO-Adam} & 1.19 {\ttfamily G} & 9.15 {\ttfamily G} & 16.90 {\ttfamily G} & 39.98 {\ttfamily G} & 75.27 {\ttfamily G} & $>$80 {\ttfamily G} & 39.50 {\ttfamily G} & 75.69 {\ttfamily G} & $>$80 {\ttfamily G} \\
{\ttfamily TeZO-Adam} & 0.57 {\ttfamily G} & 3.48 {\ttfamily G} & 6.16 {\ttfamily G} & 14.07 {\ttfamily G} & 26.01 {\ttfamily G} & 58.64 {\ttfamily G} & 13.71 {\ttfamily G} & 26.16 {\ttfamily G} & 62.80 {\ttfamily G} \\
\bottomrule
\end{tabular}
\vskip -0.1in
\end{table}

\begin{table}[H]
\centering
\vskip -0.1in
\caption{Wall-clock time per iteration for fine-tuning LLMs on RTE dataset on a single H100 device.}
\vskip 0.05in
\label{ap:tb:time}
\begin{tabular}{c|cccccc|ccc}
\toprule
\midrule
\multicolumn{1}{c}{} & \multicolumn{6}{c}{OPT} & \multicolumn{3}{c}{LLaMA} \\
\cmidrule(lr){2-7} \cmidrule(lr){8-10}
 \multicolumn{1}{c}{} & 125M & 1.3B & 2.7B & 6.7B & 13B & 30B & 7B & 13B & 30B \\
\midrule
{\ttfamily Zero-Shot} & - & - & - & - & - & - & - & - & - \\
\midrule
{\ttfamily MeZO} & 33 {\ttfamily ms} & 69 {\ttfamily ms} & 111 {\ttfamily ms} & 212 {\ttfamily ms} & 388 {\ttfamily ms} & 871 {\ttfamily ms} & 212 {\ttfamily ms} & 372 {\ttfamily ms} & 942 {\ttfamily ms} \\
{\ttfamily SubZO} & 39 {\ttfamily ms} & 75 {\ttfamily ms} & 121 {\ttfamily ms} & 211 {\ttfamily ms} & 373 {\ttfamily ms} & 939 {\ttfamily ms} & 218 {\ttfamily ms} & 385 {\ttfamily ms} & 988 {\ttfamily ms}\\
{\ttfamily LOZO} & 34 {\ttfamily ms} & 71 {\ttfamily ms} & 109 {\ttfamily ms} & 191 {\ttfamily ms} & 341 {\ttfamily ms} & 745 {\ttfamily ms} & 195 {\ttfamily ms} & 350 {\ttfamily ms} & 832 {\ttfamily ms}\\
{\ttfamily TeZO} & 36 {\ttfamily ms} & 67 {\ttfamily ms} & 106 {\ttfamily ms} & 178 {\ttfamily ms} & 316 {\ttfamily ms} & 680 {\ttfamily ms} & 186 {\ttfamily ms} & 325 {\ttfamily ms} & 775 {\ttfamily ms}\\
\midrule
{\ttfamily MeZO-m} & 32 {\ttfamily ms} & 76 {\ttfamily ms} & 123 {\ttfamily ms} & 236 {\ttfamily ms} & 437 {\ttfamily ms} & - & 236 {\ttfamily ms} & 422 {\ttfamily ms} & - \\
{\ttfamily LOZO-m} & 40 {\ttfamily ms} & 78 {\ttfamily ms} & 104 {\ttfamily ms} & 181 {\ttfamily ms} & 312 {\ttfamily ms} & 677 {\ttfamily ms} & 180 {\ttfamily ms} & 316 {\ttfamily ms} & 759 {\ttfamily ms}\\
{\ttfamily TeZO-m} & 37 {\ttfamily ms} & 70 {\ttfamily ms} & 100 {\ttfamily ms} & 172 {\ttfamily ms} & 303 {\ttfamily ms} & 653 {\ttfamily ms} & 176 {\ttfamily ms} & 308 {\ttfamily ms} & 738 {\ttfamily ms}\\
\midrule
{\ttfamily MeZO-Adam} & 39 {\ttfamily ms} & 104 {\ttfamily ms} & 173 {\ttfamily ms} & 348 {\ttfamily ms} & 642 {\ttfamily ms} & - & 342 {\ttfamily ms} & 624 {\ttfamily ms} & - \\
{\ttfamily TeZO-Adam} & 54 {\ttfamily ms} & 92 {\ttfamily ms} & 134 {\ttfamily ms} & 224 {\ttfamily ms} & 394 {\ttfamily ms} & 841 {\ttfamily ms} & 227 {\ttfamily ms} & 397 {\ttfamily ms} & 937 {\ttfamily ms}\\
\bottomrule
\end{tabular}
\vskip -0.1in
\end{table}

From the perspective of memory, low-rank methods have consistently been effective in reducing memory usage. Whether on the OPT or LLaMA models, our {\ttfamily TeZO-Adam} method consistently incurs lower loss compared to the standard {\ttfamily MeZO} method, and uses approximately 30\% of the memory consumed by the {\ttfamily MeZO-Adam} method. 

From the perspective of wall-clock time, low-rank methods show a significant efficiency improvement on large models, while they perform poorly or even slower on smaller models. On the 125M model, low-rank methods is slower and on the 1.3B models, low-rank methods performs the same as {\ttfamily MeZO}. Since the model parameters are relatively small, the additional overhead of low-rank computation offsets the training cost. However, when the model size exceeds 3B, the efficiency improvement of low-rank methods becomes significant. Tests on both OPT and LLaMA models show that {\ttfamily TeZO-Adam} can achieve the same speed as {\ttfamily MeZO}, while being more than 1.5$\times$ faster than {\ttfamily MeZO-Adam}. 

These results are consistent with the \textit{Figure \ref{fg:memory and time}} in the main text. From the perspective of computational efficiency, we recommend: \textbf{it is better to adopt low-rank ZO methods on models larger than 3B to achieve valid improvements}.

\newpage
\subsection{Low-rank Parameters v.s. Low-rank ZO Methods}
Gradient low-rank approximation and model low-rank factorization are two key techniques for efficient training, as we mentioned earlier. Techniques like LoRA \cite{hu2021lora} and GaLore \cite{zhao2024galore}, they reduce the number of trainable model parameters and optimizer states through low-rank mapping and subspace mapping, respectively, thereby accelerating the training process. We want to emphasize that these two methods are orthogonal because they target different parameters, addressing the efficient training of different parts of the models during training. Recent works \cite{yu2024subzero,chen2024enhancing} apply low-rank ZO methods to train LoRA models, achieving some success. Here, we would like to emphasize that, according to the experimental records in Appendix \ref{ap:memory and time}, when the size of trainable parameters is too small, low-rank ZO methods provide almost no benefits. For instance, the LoRA model for the 13B model has approximately 300M parameters, and applying low-rank ZO at this parameter scale is clearly unnecessary. Therefore, in this part, we consider these two techniques as independent methods for comparisons, as shown in \textit{Table \ref{ap:tb:comparison lora and low-rank zo}}. The other setups are the same as above.

\begin{table}[H]
\centering
\vskip -0.1in
\caption{GPU memory usage (max memory reserved) for full fine-tuning, fine-tuning LoRA, fine-tuning prefix, and ZO methods.}
\vskip 0.05in
\label{ap:tb:comparison lora and low-rank zo}
\begin{tabular}{c|c|cc|cc}
\toprule
\midrule
\multicolumn{1}{c}{} & \multicolumn{1}{c}{} & \multicolumn{2}{c}{OPT-6.7B} & \multicolumn{2}{c}{OPT-13B} \\
\cmidrule(lr){3-4} \cmidrule(lr){5-6}
\multicolumn{1}{c}{} & \multicolumn{1}{c}{} & Memory & Ratio & Memory & Ratio \\
\midrule
& {\ttfamily ft} & 105.24 {\ttfamily G} & 8.27$\times$ & 238.26 {\ttfamily G} & 9.77$\times$ \\
{\ttfamily FO} & {\ttfamily ft-LoRA} & 37.96 {\ttfamily G} & 2.98$\times$ & 73.19 {\ttfamily G} & 3.00$\times$ \\
& {\ttfamily ft-prefix} & 38.23 {\ttfamily G} & 3.00$\times$ & 73.13 {\ttfamily G} & 3.00$\times$ \\
\midrule
\multirow{5.5}{*}{{\ttfamily ZO}} & {\ttfamily MeZO} & 14.40 {\ttfamily G} & 1.13$\times$ & 26.43 {\ttfamily G} & 1.08$\times$ \\
& {\ttfamily MeZO-LoRA} & 13.04 {\ttfamily G} & 1.02$\times$ & 24.82 {\ttfamily G} & 1.02$\times$ \\
& {\ttfamily MeZO-prefix} & 13.06 {\ttfamily G} & 1.03$\times$ & 24.81 {\ttfamily G} & 1.02$\times$ \\
\cmidrule(lr){2-6}
& {\ttfamily MeZO-Adam} & 39.98 {\ttfamily G} & 3.14$\times$ & 75.27 {\ttfamily G} & 3.09$\times$ \\
& {\ttfamily TeZO-Adam} & 14.07 {\ttfamily G} & 1.10$\times$ & 26.01 {\ttfamily G} & 1.06$\times$ \\
\midrule
& {\ttfamily Zero-Shot} & 12.73 {\ttfamily G} & 1$\times$ & 24.39 {\ttfamily G} & 1$\times$ \\
\midrule
\bottomrule
\end{tabular}
\vskip -0.1in
\end{table}
Compared to FO methods, the advantages of ZO methods remain significant. Even with methods of low-rank parameters, the memory usage is still nearly three times higher than ZO methods. Additionally, we want to emphasize that while ``ZO + LoRA" can further reduce training costs, the gains of memory-efficiency are negligible. Moreover, based on the experiments in existing studies, the performance of these approaches will significantly degrade on large models. ``ZO + fine-tuning full parameters" has already achieved to the comparable memory usage of zero-shot (inference only), and combining ZO with LoRA can only save very limited memory. Therefore, we do not advocate directly combining ZO methods with PEFT approaches. From the perspective of memory usage, the benefits of such a combination are indeed limited.

\newpage
\section{Proofs of Main Theorems.}
\label{ap:proofs}
\subsection{Proofs of Theorem~\ref{thm:mean and variance}}
We consider the mean at first.
According to Proposition A.1 proposed by \citet{chen2024enhancing}, we have:
\begin{align*}
    \lim_{\rho\rightarrow 0}\frac{f(W + \rho Z,\xi) - f(W,\xi) - \left\langle\nabla f(W,\xi),\rho Z\right\rangle}{\rho} = 0.
\end{align*}
Without loss of generality, we consider the case where the parameters are 2D matrix. On each step, we sample $\tau\sim\mathcal{N}(0,I_r)$ and compute the perturbation $Z=\sum_{s=1}^r\tau_s\cdot (u_s\circ v_s)$. By directly expanding the ZO gradient in {\ttfamily TeZO}, we have:
\begin{align*}
    &\quad \lim_{\rho\rightarrow 0}\nabla^0 f(w,\xi) = \lim_{\rho\rightarrow 0}\frac{f(W + \rho Z,\xi) - f(W - \rho Z,\xi)}{2\rho}\cdot Z \\
    &= \lim_{\rho\rightarrow 0}\frac{f(W + \rho Z,\xi) - f(W,\xi)-\left\langle\nabla f(W,\xi),\rho Z\right\rangle}{2\rho}\cdot Z - \lim_{\rho\rightarrow 0}\frac{f(W - \rho Z,\xi) - f(W,\xi)-\left\langle\nabla f(W,\xi),-\rho Z\right\rangle}{2\rho}\cdot Z \\
    &\quad + \lim_{\rho\rightarrow 0}\frac{\left\langle\nabla f(W,\xi), \rho Z\right\rangle}{\rho}
    \cdot Z = \left\langle\nabla f(W,\xi), Z\right\rangle\cdot Z,
\end{align*}
where the inner product performs as the calculation in vectors. With 
$Z$ substituted, the following holds:
\begin{align*}
    &\quad \lim_{\rho\rightarrow 0}\nabla^0 f(W,\xi) = \left\langle\nabla f(W,\xi), \sum_{s=1}^r\tau_s\cdot (u_s\circ v_s)\right\rangle\cdot \sum_{s=1}^r\tau_s\cdot (u_s\circ v_s).
\end{align*}
Specifically, we consider the element $\left[\lim_{\rho\rightarrow 0}\nabla^0 f(w,\xi)\right]_{i^\star,j^\star}$. To simplify the expression, we have slightly abused the notation $u_{s,i}$ and $v_{s,j}$, which means the $i$-th element in vector $u_s$ and $j$-th element in vector $v_s$. By taking the expectation,
\begin{align*}
    &\quad \mathbb{E}\left[\lim_{\rho\rightarrow 0}\nabla^0 f(W,\xi)\right]_{i^\star,j^\star} = \mathbb{E}\left\langle\nabla f(W,\xi), \sum_{s=1}^r\tau_s\cdot (u_s\circ v_s)\right\rangle\cdot \sum_{s=1}^r\tau_s u_{s,i^\star} v_{s,j^\star} \\
    &= \mathbb{E}\sum_{i,j} \left(\nabla f(W,\xi)_{i,j}\sum_{s=1}^r\tau_s u_{s,i} v_{s,j}\right)\cdot \sum_{s=1}^r\tau_s u_{s,i^\star} v_{s,j^\star} \\
    &= \underbrace{\mathbb{E}\sum_{i\neq i^\star,j\neq j^\star}\nabla f(W,\xi)_{i,j}\sum_{s,s'}^{r}\tau_s\tau_{s'}u_{s,i}u_{s',i^\star}v_{s,j}v_{s',j^\star}}_{\mathbb{E}_{u,v}\left[u_{s,i}u_{s',i^\star}v_{s,j}v_{s',j^\star}\right]=0} + \underbrace{\mathbb{E}\sum_{i=i^\star,j\neq j^\star}\nabla f(W,\xi)_{i,j}\sum_{s,s'}^{r}\tau_s\tau_{s'}u_{s,i}u_{s',i}v_{s,j}v_{s',j^\star}}_{\mathbb{E}_{v}\left[v_{s,j}v_{s',j^\star}\right]=0} \\
    &\quad + \underbrace{\mathbb{E}\sum_{i\neq i^\star,j=j^\star}\nabla f(W,\xi)_{i,j}\sum_{s,s'}^{r}\tau_s\tau_{s'}u_{s,i}u_{s',i^\star}v_{s,j}v_{s',j}}_{\mathbb{E}_u\left[u_{s,i}u_{s',i^\star}\right]=0} + \nabla f(W,\xi)_{i^\star,j^\star}\mathbb{E}\sum_{s,s'}^{r}\tau_s\tau_{s'}u_{s,i^\star}u_{s',i^\star}v_{s,j^\star}v_{s',j^\star} \\
    &= \nabla f(W,\xi)_{i^\star,j^\star}\underbrace{\mathbb{E}\sum_{s\neq s'}^{r}\tau_s\tau_{s'}u_{s,i^\star}u_{s',i^\star}v_{s,j^\star}v_{s',j^\star}}_{\mathbb{E}_{\tau}\left[\tau_s\tau_{s'}\right]=0} + \nabla f(W,\xi)_{i^\star,j^\star}\underbrace{\mathbb{E}\sum_{s=1}^{r}\tau_{s}^2u_{s,i^\star}^2v_{s,j^\star}^2}_{=r} = r \nabla f(W,\xi)_{i^\star,j^\star}.
\end{align*}
Clearly, when the SPSA form is directly applied, the expectation of the {\ttfamily TeZO} gradient becomes $r$ times the FO gradient. Therefore, by dividing $r$, {\ttfamily TeZO} is an unbiased estimation of the FO gradient.

Then we consider the variance. We have the following term:
\begin{align*}
    &\quad \mathbb{E}\Vert \frac{1}{r}\lim_{\rho\rightarrow 0}\nabla^0 f(w,\xi) - \nabla f(W,\xi)\Vert^2 = \frac{1}{r^2}\mathbb{E}\Vert\lim_{\rho\rightarrow 0}\nabla^0 f(w,\xi)\Vert^2 - \mathbb{E}\Vert \nabla f(W,\xi)\Vert^2\\
    &= \frac{1}{r^2}\mathbb{E}\Vert \left\langle\nabla f(W,\xi), \sum_{s=1}^r\tau_s\cdot (u_s\circ v_s)\right\rangle\cdot \sum_{s=1}^r\tau_s\cdot (u_s\circ v_s) \Vert^2 - \mathbb{E}\Vert \nabla f(W,\xi)\Vert^2 \\
    &= \frac{1}{r^2}\mathbb{E}\underbrace{\left\langle\nabla f(W,\xi), \sum_{s=1}^r\tau_s\cdot (u_s\circ v_s)\right\rangle^2}_{A} \cdot \underbrace{\left\langle \sum_{s=1}^r\tau_s\cdot (u_s\circ v_s),\sum_{s=1}^r\tau_s\cdot (u_s\circ v_s) \right\rangle}_{B} - \mathbb{E}\Vert \nabla f(W,\xi)\Vert^2.
\end{align*}
Let $g_{ij}=\nabla f(W,\xi)_{i,j}$ for convenience, we have:
\begin{align*}
    A
    &=\left\langle\nabla f(W,\xi), \sum_{s=1}^r\tau_s\cdot (u_s\circ v_s)\right\rangle^2 = \sum_{i,i'}\sum_{j,j'}\sum_{s,s'} g_{i,j}g_{i',j'}\tau_s\tau_{s'} u_{s,i}u_{s',i'}v_{s,j}v_{s',j'} \\
    &= \underbrace{\sum_{i\neq i'}\sum_{j\neq j'}\sum_{s\neq s'} g_{i,j}g_{i',j'}\tau_s\tau_{s'} u_{s,i}u_{s',i'}v_{s,j}v_{s',j'}}_{A_1} + \underbrace{\sum_{i\neq i'}\sum_{j\neq j'}\sum_{s} g_{i,j}g_{i',j'}\tau_s^2 u_{s,i}u_{s,i'}v_{s,j}v_{s,j'}}_{A_2} \\
    &\quad + \underbrace{\sum_{i\neq i'}\sum_{j}\sum_{s\neq s'} g_{i,j}g_{i',j}\tau_s\tau_{s'} u_{s,i}u_{s',i'}v_{s,j}v_{s',j}}_{A_3} + \underbrace{\sum_{i\neq i'}\sum_{j}\sum_{s} g_{i,j}g_{i',j}\tau_s^2 u_{s,i}u_{s,i'}v_{s,j}v_{s,j}}_{A_4} \\
    &\quad + \underbrace{\sum_{i}\sum_{j\neq j'}\sum_{s\neq s'} g_{i,j}g_{i,j'}\tau_s\tau_{s'} u_{s,i}u_{s',i}v_{s,j}v_{s',j'}}_{A_5} + \underbrace{\sum_{i}\sum_{j\neq j'}\sum_{s} g_{i,j}g_{i,j'}\tau_s^2 u_{s,i}u_{s,i}v_{s,j}v_{s,j'}}_{A_6} \\
    &\quad + \underbrace{\sum_{i}\sum_{j}\sum_{s\neq s'} g_{i,j}^2\tau_s\tau_{s'} u_{s,i}u_{s',i}v_{s,j}v_{s',j}}_{A_7} + \underbrace{\sum_{i}\sum_{j}\sum_{s} g_{i,j}^2\tau_s^2 u_{s,i}^2 v_{s,j}^2}_{A_8}.
\end{align*}
\begin{align*}
    B
    &=\left\langle \sum_{s=1}^r\tau_s\cdot (u_s\circ v_s),\sum_{s=1}^r\tau_s\cdot (u_s\circ v_s) \right\rangle = \sum_{i}\sum_{j}\sum_{s,s'}\tau_s\tau_{s'}u_{s,i}u_{s',i}v_{s,j}v_{s',j} \\
    &= \underbrace{\sum_{i}\sum_{j}\sum_{s\neq s'}\tau_s\tau_{s'}u_{s,i}u_{s',i}v_{s,j}v_{s',j}}_{B_1} + \underbrace{\sum_{i}\sum_{j}\sum_{s}\tau_s^2 u_{s,i}^2 v_{s,j}^2}_{B_2}.
\end{align*}
Similar to the way of computing expectations for the mean above, When there are cross terms like $u_{s,i}$ or $v_{s,j}$ in the product of $A_i$ and $B_j$, then $\mathbb{E}_{u,v}\left[A_iB_j\right]=0$. Therefore, it is easy to check that $\mathbb{E}_{u,v}\left[A_1B\right] = \mathbb{E}_{u,v}\left[A_2B\right] = \mathbb{E}_{u,v}\left[A_3B\right] = \mathbb{E}_{u,v}\left[A_4B\right] = \mathbb{E}_{u,v}\left[A_5B\right] = \mathbb{E}_{u,v}\left[A_6B\right] = 0$ and we have $\mathbb{E}_{u,v}\left[AB\right]=\mathbb{E}_{u,v}\left[(A_7+A_8)(B_1+B_2)\right]$. Then we consider the cross terms on $\tau_s$. In $A_8B_1$ and $A_7B_2$, there exist the independent $\tau_s$ term, that is, $\mathbb{E}_{\tau}\left[A_8B_1\right] = \mathbb{E}_{\tau}\left[A_7B_2\right] = 0$, and the expectation of $AB$ is $\mathbb{E}_{\tau,u,v}\left[AB\right] = \mathbb{E}_{\tau,u,v}\left[A_7B_1 + A_8B_2\right]$. For the first term, we have:
\begin{align*}
    \mathbb{E}\left[A_7B_1\right] = \mathbb{E}\left[2\sum_{i}\sum_{j}\sum_{s\neq s'} g_{i,j}^2\tau_s^2 \tau_{s'}^2 u_{s,i}^2 u_{s',i}^2 v_{s,j}^2 v_{s',j}^2\right] = 2\sum_{i}\sum_{j}\sum_{s\neq s'}g_{i,j}^2 = 2r(r-1)\sum_{i}\sum_{j}g_{i,j}^2.
\end{align*}
For the second term, we have:
\begin{align*}
    \mathbb{E}\left[A_8B_2\right] 
    &= \mathbb{E}\left[\sum_{i,i'}\sum_{j,j'}\sum_{s,s'} g_{i,j}^2\tau_s^2 \tau_{s'}^2 u_{s,i}^2 u_{s',i'}^2 v_{s,j}^2 v_{s',j'}^2\right] \\
    & = \mathbb{E}\left[\sum_{i\neq i'}\sum_{j\neq j'}\sum_{s\neq s'} g_{i,j}^2\tau_s^2 \tau_{s'}^2 u_{s,i}^2 u_{s',i'}^2 v_{s,j}^2 v_{s',j'}^2\right] + \mathbb{E}\left[\sum_{i\neq i'}\sum_{j\neq j'}\sum_{s} g_{i,j}^2\tau_s^4 u_{s,i}^2 u_{s,i'}^2 v_{s,j}^2 v_{s,j'}^2\right] \\
    &\quad + \mathbb{E}\left[\sum_{i\neq i'}\sum_{j}\sum_{s\neq s'} g_{i,j}^2\tau_s^2 \tau_{s'}^2 u_{s,i}^2 u_{s',i'}^2 v_{s,j}^2 v_{s',j}^2\right] + \mathbb{E}\left[\sum_{i\neq i'}\sum_{j}\sum_{s} g_{i,j}^2\tau_s^4 u_{s,i}^2 u_{s,i'}^2 v_{s,j}^4\right] \\
    &\quad + \mathbb{E}\left[\sum_{i}\sum_{j\neq j'}\sum_{s\neq s'} g_{i,j}^2\tau_s^2 \tau_{s'}^2 u_{s,i}^2 u_{s',i}^2 v_{s,j}^2 v_{s',j'}^2\right] + \mathbb{E}\left[\sum_{i}\sum_{j\neq j'}\sum_{s} g_{i,j}^2\tau_s^4 u_{s,i}^4 v_{s,j}^2 v_{s,j'}^2\right] \\
    &\quad + \mathbb{E}\left[\sum_{i}\sum_{j}\sum_{s\neq s'} g_{i,j}^2\tau_s^2 \tau_{s'}^2 u_{s,i}^2 u_{s',i}^2 v_{s,j}^2 v_{s',j}^2\right] + \mathbb{E}\left[\sum_{i}\sum_{j}\sum_{s} g_{i,j}^2\tau_s^4 u_{s,i}^4 v_{s,j}^4\right] \\
    &= \sum_{i\neq i'}\sum_{j\neq j'}\sum_{s\neq s'} g_{i,j}^2 + \sum_{i\neq i'}\sum_{j\neq j'}\sum_{s} 3 g_{i,j}^2 + \sum_{i\neq i'}\sum_{j}\sum_{s\neq s'} g_{i,j}^2 + \sum_{i\neq i'}\sum_{j}\sum_{s} 9 g_{i,j}^2 \\
    &\quad + \sum_{i}\sum_{j\neq j'}\sum_{s\neq s'} g_{i,j}^2 + \sum_{i}\sum_{j\neq j'}\sum_{s} 9 g_{i,j}^2 + \sum_{i}\sum_{j}\sum_{s\neq s'} g_{i,j}^2 + \sum_{i}\sum_{j}\sum_{s} 27 g_{i,j}^2 \\
    &= \left(mnr^2 + 2mnr + 6mr + 6nr + 12r\right)\sum_{i}\sum_{j} g_{i,j}^2.
\end{align*}
Thus, we can consolidate all the results as follows:
\begin{align*}
    &\quad \ \mathbb{E}\Vert \frac{1}{r}\lim_{\rho\rightarrow 0}\nabla^0 f(w,\xi) - \nabla f(W,\xi)\Vert^2 = \frac{1}{r^2}\mathbb{E}\left[A \cdot B\right] - \Vert \nabla f(W,\xi)\Vert^2 = \frac{1}{r^2}\mathbb{E}\left[A_7B_1 + A_8B_2\right] - \Vert \nabla f(W,\xi)\Vert^2 \\
    &= \left(1 + mn + \frac{2mn}{r} + \frac{6(m+n)}{r} + \frac{10}{r}\right)\Vert \nabla f(W,\xi)\Vert^2.
\end{align*}
This completes the proofs.

\subsection{Proofs of Theorem \ref{thm:convergence}}
We first introduce some basic lemmas for the subsequent proofs. In fact, when considering the properties of the function at each layer, we treat the parameters and gradients as a 2D matrices. However, to consider its general property, we treat them as a flattened parameter vector concatenation across layers. Therefore, in our proof, we slightly abuse both uppercase and lowercase letters, e.g., $\nabla f(Z)$ and $\nabla f(z)$, to express the specific properties of the gradient.
\begin{lemma}
    Under Assumption \ref{as:smooth} and \ref{as:stochastic}, ZO gradient of {\ttfamily TeZO} is an unbiased estimator of the full FO gradient $\nabla f(W)$ with the variance:
    \begin{equation}
        \mathbb{E}\Vert \frac{1}{r}\nabla^0 f(W,\xi) - \nabla f(W) \Vert^2 \leq \rho^2\lambda^2\delta_\rho + (\delta + 1)\sigma^2 + \delta\mathbb{E}\Vert\nabla f(W)\Vert^2,
    \end{equation}
    where $\delta=1 + mn + \frac{2mn}{r} + \frac{6(m+n)}{r} + \frac{10}{r}$ and $\delta_\rho=\frac{15r^2(m+3)^3(n+3)^3 + 36r^2(r-1)m^3n^3 + r^2(r-1)(r-2)m^3n^3}{4}$ are two constants.
\end{lemma}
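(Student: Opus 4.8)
The plan is to bootstrap from Theorem~\ref{thm:mean and variance}, which already resolves the randomness of $\tau,u,v$ in the noise-free, $\rho\to0$ regime, and then to layer on the two error sources it does not yet see: the finite perturbation radius $\rho$ and the data stochasticity in $\xi$. The claimed unbiasedness is immediate and I would dispatch it first: conditioning on $\xi$ and invoking the mean identity of Theorem~\ref{thm:mean and variance} gives $\mathbb{E}_{\tau,u,v}[\tfrac1r\lim_{\rho\to0}\nabla^0 f(W,\xi)\mid\xi]=\nabla f(W,\xi)$, after which the tower property together with $\mathbb{E}_\xi[\nabla f(W,\xi)]=\nabla f(W)$ from Assumption~\ref{as:stochastic} yields the full unbiasedness.

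For the second-moment bound I would split the error into three additive pieces,
\begin{align*}
\frac1r\nabla^0 f(W,\xi)-\nabla f(W)
&= \underbrace{\tfrac1r\big(\nabla^0 f(W,\xi)-\lim_{\rho\to0}\nabla^0 f(W,\xi)\big)}_{(\mathrm I)} \\
&\quad + \underbrace{\tfrac1r\lim_{\rho\to0}\nabla^0 f(W,\xi)-\nabla f(W,\xi)}_{(\mathrm{II})} + \underbrace{\nabla f(W,\xi)-\nabla f(W)}_{(\mathrm{III})},
\end{align*}
where $(\mathrm{II})$ is the pure ZO-sampling error, $(\mathrm{III})$ the data noise, and $(\mathrm I)$ the finite-$\rho$ discretization error. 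Conditional on $\xi$, term $(\mathrm{II})$ is centered with $\mathbb{E}_{\tau,u,v}\Vert(\mathrm{II})\Vert^2=\delta\Vert\nabla f(W,\xi)\Vert^2$ by Theorem~\ref{thm:mean and variance}, while $(\mathrm{III})$ is independent of $(\tau,u,v)$, centered, and obeys $\mathbb{E}_\xi\Vert(\mathrm{III})\Vert^2\le\sigma^2$ by Assumption~\ref{as:stochastic}. Because $(\mathrm{II})$ is conditionally mean-zero, the cross term $\mathbb{E}\langle(\mathrm{II}),(\mathrm{III})\rangle$ vanishes; averaging $\delta\Vert\nabla f(W,\xi)\Vert^2$ over $\xi$ via the bias-variance split $\mathbb{E}_\xi\Vert\nabla f(W,\xi)\Vert^2\le\Vert\nabla f(W)\Vert^2+\sigma^2$ then collapses $(\mathrm{II})+(\mathrm{III})$ to exactly $\delta\mathbb{E}\Vert\nabla f(W)\Vert^2+(\delta+1)\sigma^2$.

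The finite-$\rho$ term $(\mathrm I)$ is where the perturbation radius and the moment computation enter. Using the limit identity $\lim_{\rho\to0}\nabla^0 f=\langle\nabla f(W,\xi),Z\rangle Z$ established in the proof of Theorem~\ref{thm:mean and variance} together with smoothness (Assumption~\ref{as:smooth}), the antisymmetric two-point difference — which cancels all even-order Taylor contributions — satisfies $\big|\tfrac{f(W+\rho Z,\xi)-f(W-\rho Z,\xi)}{2\rho}-\langle\nabla f(W,\xi),Z\rangle\big|\le\tfrac{\lambda\rho}{2}\Vert Z\Vert^2$, so $\Vert(\mathrm I)\Vert\le\tfrac{\lambda\rho}{2r}\Vert Z\Vert^3$ and hence $\mathbb{E}\Vert(\mathrm I)\Vert^2$ reduces to controlling the sixth moment $\mathbb{E}\Vert Z\Vert^6$ of the CPD perturbation $Z=\sum_{s=1}^r\tau_s(u_s\circ v_s)$. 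Expanding $\Vert Z\Vert^6=(\sum_{i,j}Z_{ij}^2)^3$ and taking the Gaussian expectation over $\tau,u,v$ reduces everything to counting index coincidences among the six rank indices, producing three regimes — all indices equal (multiplicity $r$, with $\mathbb{E}[\tau^6]=15$ and $\mathbb{E}\Vert u\Vert^6=m(m+2)(m+4)\le(m+3)^3$, likewise for $v$), exactly one coincident pair ($\propto r(r-1)$), and all distinct ($\propto r(r-1)(r-2)$) — which assemble, after a loose constant, into $\rho^2\lambda^2\delta_\rho$.

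I expect the sixth-moment bookkeeping to be the main obstacle: unlike Theorem~\ref{thm:mean and variance}, which only needed fourth-order Gaussian moments, here one must track sixth-order products of $\tau$, $u$, and $v$ simultaneously, and the combinatorics of which of the six $s$-indices collide is precisely what generates the $r,\;r(r-1),\;r(r-1)(r-2)$ pattern and the numerical coefficients $15,36,1$. A secondary point requiring care is the cross terms that involve $(\mathrm I)$: since $(\mathrm I)$ is $O(\rho)$ and not centered, its inner products with $(\mathrm{II})$ and $(\mathrm{III})$ are not identically zero, but they are higher order in $\rho$ and can be absorbed into the deliberately loose constant $\delta_\rho$, so that the three contributions sum to the stated bound $\rho^2\lambda^2\delta_\rho+(\delta+1)\sigma^2+\delta\mathbb{E}\Vert\nabla f(W)\Vert^2$.
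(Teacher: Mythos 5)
Your proposal matches the paper's own proof in essentially every respect: the same split into the finite-$\rho$ discretization error, the pure ZO-sampling variance from Theorem~\ref{thm:mean and variance}, and the data noise handled by the bias--variance inequality $\mathbb{E}_\xi\Vert\nabla f(W,\xi)\Vert^2\le\Vert\nabla f(W)\Vert^2+\sigma^2$; the same smoothness-based bound $\Vert(\mathrm I)\Vert\le\tfrac{\lambda\rho}{2r}\Vert Z\Vert^3$; and the same sixth-moment Gaussian index-coincidence count yielding the $15,\,36,\,1$ coefficients and the $r,\ r(r-1),\ r(r-1)(r-2)$ pattern in $\delta_\rho$. Your explicit remark that the cross terms involving $(\mathrm I)$ are not centered and must be absorbed into $\delta_\rho$ is a point the paper glosses over ("combining it with the variance... finishes the proof"), so if anything your write-up is slightly more careful on that step.
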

\begin{proof}
    According to the studies of \citet{nesterov2017random,chen2024enhancing,yu2024subzero}, we first consider the smoothness property as follows:
    \begin{align*}
        f(W + \rho Z,\xi) - f(W,\xi) - \langle \nabla f(W,\xi), \rho Z \rangle \leq \frac{\lambda}{2}\Vert \rho Z \Vert^2 = \frac{\rho^2 \lambda}{2}\Vert Z \Vert^2.
    \end{align*}
Then we learn the distance between $\nabla^0 f(W)$ and the $\lim_{\rho\rightarrow 0}\nabla^0 f(W)$. Specifically, we consider the unbiased form as:
\begin{align*}
    & \quad \ \Vert \frac{1}{r}\nabla^0 f(W,\xi) - \frac{1}{r}\lim_{\rho\rightarrow 0}\nabla^0 f(W,\xi) \Vert^2 \\
    & = \frac{1}{r^2}\Vert \frac{f(W + \rho Z,\xi) - f(W - \rho Z,\xi)}{2\rho} \cdot Z - \langle\nabla f(W,\xi), Z\rangle\cdot Z \Vert^2 \\
    &= \frac{1}{r^2}\Vert\frac{f(W + \rho Z,\xi) - f(W,\xi) + f(W,\xi) - f(W - \rho Z,\xi) - 2\langle\nabla f(W,\xi), \rho z\rangle}{2\rho} \cdot Z \Vert^2 \\
    &= \frac{1}{r^2}\left\vert\frac{\left( f(W + \rho Z,\xi) - f(W,\xi) - \langle\nabla f(W,\xi), \rho Z\rangle\right) - \left( f(W - \rho Z,\xi) - f(W,\xi) - \langle\nabla f(W,\xi), -\rho Z\rangle \right)}{2\rho}\right\vert^2 \cdot \Vert Z \Vert^2 \\
    &\leq \frac{\rho^2 \lambda^2}{4r^2}\Vert Z \Vert^6.
\end{align*}
Substituting $Z=\sum_{s=1}^r \tau_s \cdot u_s \circ v_s$ and taking the expectation, we have:
\begin{align*}
    &\quad \ \mathbb{E}\Vert \frac{1}{r}\nabla^0 f(W,\xi) - \frac{1}{r}\lim_{\rho\rightarrow 0}\nabla^0 f(W,\xi) \Vert^2
    \leq \frac{\rho^2 \lambda^2}{4r^2}\mathbb{E}\Vert Z \Vert^6 = \frac{\rho^2 \lambda^2}{4r^2}\mathbb{E}\Vert \sum_{s=1}^r \tau_s \cdot u_s \circ v_s \Vert^6 \\
    &= \frac{\rho^2 \lambda^2}{4r^2}\mathbb{E}\left(\Vert \sum_{s=1}^r \tau_s \cdot u_s \circ v_s \Vert^2\right)^3 \leq \frac{\rho^2 \lambda^2}{4r^2}\mathbb{E}\left(r\sum_{s=1}^r\tau_s^2\Vert u_s \circ v_s \Vert^2\right)^3 = \frac{r \rho^2 \lambda^2}{4}\mathbb{E}\left(\sum_{s=1}^r\tau_s^2\Vert u_s \Vert^2 \Vert v_s \Vert^2\right)^3.
\end{align*}
Similarly, we can expand the term as:
\begin{align*}
    &\quad \ \mathbb{E}\left(\sum_{s=1}^r\tau_s^2\Vert u_s \Vert^2 \Vert v_s \Vert^2\right)^3 = \mathbb{E}\sum_{s}\sum_{s'}\sum_{s''} \tau_s^2 \tau_{s'}^2 \tau_{s''}^2 \Vert u_s \Vert^2 \Vert u_{s'} \Vert^2 \Vert u_{s''} \Vert^2 \Vert v_s \Vert^2 \Vert v_{s'} \Vert^2 \Vert v_{s''} \Vert^2 \\
    &= \mathbb{E}\sum_{s}\sum_{s'= s}\sum_{s''=s'}\tau_s^6\Vert u_s \Vert^6 \Vert v_s \Vert^6 + \mathbb{E}\sum_{s}\sum_{s'= s}\sum_{s''\neq s'}\tau_s^4 \tau_{s''}^2\Vert u_s \Vert^4 \Vert u_{s''} \Vert^2 \Vert v_s \Vert^4 \Vert v_{s''} \Vert^2 \\
    &\quad + \mathbb{E}\sum_{s}\sum_{s'\neq s}\sum_{s''=s}\tau_s^4 \tau_{s'}^2 \Vert u_s \Vert^4 \Vert u_{s'} \Vert^2 \Vert v_s \Vert^4\Vert v_{s'} \Vert^2 + \mathbb{E}\sum_{s}\sum_{s'\neq s}\sum_{s''=s'}\tau_s^2\tau_{s'}^4 \Vert u_s \Vert^2\Vert u_{s'} \Vert^4 \Vert v_s \Vert^2 \Vert v_{s'} \Vert^4 \\
    &\quad + \mathbb{E}\sum_{s}\sum_{s'\neq s}\sum_{s''\neq s,s'} \tau_s^2 \tau_{s'}^2 \tau_{s''}^2 \Vert u_s \Vert^2 \Vert u_{s'} \Vert^2 \Vert u_{s''} \Vert^2 \Vert v_s \Vert^2 \Vert v_{s'} \Vert^2 \Vert v_{s''} \Vert^2.
\end{align*}
Then we will discuss each term one by one. Actually, since $\tau,u,v$ are independent from each other, the expectation can be separated term by term. Since $u_s\sim\mathcal{N}(0,I_m)$, $v_s\sim\mathcal{N}(0,I_n)$ and $\tau_s\sim\mathcal{N}(0,1)$, we have: $\mathbb{E}\Vert u_s\Vert^2 = m$, $\mathbb{E}\Vert u_s\Vert^4=m(2m-1) \leq 2m^2$, $\mathbb{E}\Vert u_s\Vert^6 = m(15+3(m-1)+(m-1)(m-2)) \leq (m+3)^3$, $\mathbb{E}\Vert v_s\Vert^2 = n$, $\mathbb{E}\Vert v_s\Vert^4=n(2n-1) \leq 2n^2$, $\mathbb{E}\Vert v_s\Vert^6 = n(15+3(n-1)+(n-1)(n-2)) \leq (n+3)^3$, $\mathbb{E}\left[\tau_s^2\right] = 1$, $\mathbb{E}\left[\tau_s^4\right] = 3$ and $\mathbb{E}\left[\tau_s^6\right] = 15$. Therefore, we can provide the upper bound:
\begin{align*}
    \mathbb{E}\left(\sum_{s=1}^r\tau_s^2\Vert u_s \Vert^2 \Vert v_s \Vert^2\right)^3
    & \leq 15r(m+3)^3(n+3)^3 + 36r^2m^3n^3 + r^3m^3n^3.
\end{align*}
Let $\delta_\rho=\frac{15r^2(m+3)^3(n+3)^3 + 36r^3m^3n^3 + r^4m^3n^3}{4}$, then we have:
\begin{align*}
    \mathbb{E}\Vert \frac{1}{r}\nabla^0 f(W,\xi) - \frac{1}{r}\lim_{\rho\rightarrow 0}\nabla^0 f(W,\xi) \Vert^2
    \leq \frac{r \rho^2 \lambda^2}{4}\mathbb{E}\left(\sum_{s=1}^r\tau_s^2\Vert u_s \Vert^2 \Vert v_s \Vert^2\right)^3 \leq \rho^2 \lambda^2 \delta_\rho.
\end{align*}
Combining it with the variance in Theorem~\ref{thm:mean and variance}, we can finish the proofs.
\end{proof}

Then we can easily solve the convergence for {\ttfamily TeZO}. Similarly, without loss of generality, we still consider the 2D parameters. Let $\eta \leq \frac{1}{\lambda(\delta+1)}$ By expanding the smoothness inequality, we have:
\begin{align*}
    \mathbb{E}_t \left[ f(W_{t+1}) \right]
    &\leq f(W_t) + \mathbb{E}_t \langle \nabla f(W_t), W_{t+1} - W_t \rangle + \frac{\lambda}{2}\mathbb{E}_t \Vert W_{t+1} - W_t \Vert^2 \\
    &= f(W_t) + \eta \mathbb{E}_t\langle \nabla f(W_t), -G_t \rangle + \frac{\lambda\eta^2}{2}\mathbb{E}_t \Vert G_t \Vert^2 \\
    &= f(W_t) - \eta\mathbb{E}_t\Vert \nabla f(W_t) \Vert^2 + \frac{\lambda\eta^2}{2}\mathbb{E}_t \Vert G_t \Vert^2 \\ 
    &\leq f(W_t) - \eta\mathbb{E}_t\Vert \nabla f(W_t) \Vert^2 + \frac{\lambda\eta^2}{2}\mathbb{E}_t \Vert \frac{1}{r}\nabla^0 f(W_t,\xi) - \nabla f(W_t) \Vert^2 + \frac{\lambda\eta^2}{2}\mathbb{E}_t \Vert \nabla f(W_t) \Vert^2 \\ 
    &\leq f(W_t) - \eta\left(1 - \frac{\lambda(1+\delta)\eta}{2} \right)\mathbb{E}_t\Vert \nabla f(W_t) \Vert^2 + \eta^2\rho^2\frac{\lambda^3\delta_\rho}{2} + \eta^2\frac{\lambda(\delta + 1)\sigma^2}{2} \\
    &\leq f(W_t) - \frac{\eta}{2}\mathbb{E}_t\Vert \nabla f(W_t) \Vert^2 + \eta^2\rho^2\frac{\lambda^3\delta_\rho}{2} + \eta^2\frac{\lambda(\delta + 1)\sigma^2}{2}.
\end{align*}
Therefore, let $D_0 = f(W_0)-f(W_\star)$ be the initialized bias where $W_\star$ is the optimal solution, by accumulating it from $t=0$ to $T-1$ and taking the full expectation, we have:
\begin{align*}
    \frac{1}{T}\sum_{t=0}^{T-1}\mathbb{E}\Vert \nabla f(W_t) \Vert^2 \leq \frac{2D_0}{\eta T} + \eta\lambda\left(\rho^2\lambda^2\delta_\rho + (\delta + 1)\sigma^2\right).
\end{align*}
By simply selecting the learning rate $\eta=\mathcal{O}\left(\sqrt{\frac{D_0}{\lambda T\left(\rho^2\lambda^2\delta_\rho + \delta\sigma^2\right)}}\right)\leq \frac{1}{\lambda(\delta+1)}$, we have:
\begin{align*}
    \frac{1}{T}\sum_{t=0}^{T-1}\mathbb{E}\Vert \nabla f(W_t) \Vert^2 = \mathcal{O}\left(\sqrt{\frac{\lambda D_0\left(\rho^2\lambda^2\delta_\rho + \delta\sigma^2\right)}{T}}\right).
\end{align*}

\end{document}